\theoremstyle{plain}
\newtheorem{lemma}{Lemma}
\newtheorem{theorem}{Theorem}
\newtheorem{definition}{Definition}
\newtheorem{corollary}{Corollary}
\theoremstyle{definition}
\newtheorem{example}{Example}
\newtheorem*{remark}{Remark}
\newtheorem{assumption}{Assumption}
\begin{document}
%
\title{On Optimality of Greedy Policy for a Class of Standard Reward Function of Restless Multi-armed Bandit Problem}
%
%
%
%

\author{Quan~Liu, \qquad Kehao~Wang, \qquad Lin~Chen
\IEEEcompsocitemizethanks{\IEEEcompsocthanksitem Quan~Liu and Kehao~Wang are with the school of Information, the Wuhan University of technology, 430070 Hubei, China, and Lin~Chen is with the Laboratorie de Recherche en Informatique (LRI), Department of Computer Science, The University of Paris-Sud XI, 91405, Orsay, France. (e-mail: \{quanliu, Kehao.Wang\}@whut.edu.cn, Lin.Chen@lri.fr).}}

\addtolength{\floatsep}{-\baselineskip}
\addtolength{\dblfloatsep}{-\baselineskip}
\addtolength{\textfloatsep}{-\baselineskip}
\addtolength{\dbltextfloatsep}{-\baselineskip}
\addtolength{\abovedisplayskip}{-1ex}
\addtolength{\belowdisplayskip}{-1ex}
\addtolength{\abovedisplayshortskip}{-0.75ex}
\addtolength{\belowdisplayshortskip}{-0.75ex}

\IEEEcompsoctitleabstractindextext{%
\begin{abstract}
In this paper,we consider the restless bandit problem, which is one of the most well-studied generalizations of the celebrated stochastic multi-armed bandit problem in decision theory. However, it is known be PSPACE-Hard to approximate to any non-trivial factor. Thus the optimality is very difficult to obtain due to its high complexity. A natural method is to obtain the greedy policy considering its stability and simplicity. However, the greedy policy will result in the optimality loss for its intrinsic myopic behavior generally. In this paper, by analyzing one class of so-called standard reward function, we establish the closed-form condition about the discounted factor $\beta$ such that the optimality of the greedy policy is guaranteed under the discounted expected reward criterion, especially, the condition $\beta=1$ indicating the optimality of the greedy policy under the average accumulative reward criterion. Thus, the standard form of reward function can easily be used to judge the optimality of the greedy policy without any complicated calculation. Some examples in cognitive radio networks are presented to verify the effectiveness of the mathematical result in judging the optimality of the greedy policy.
\end{abstract}

\begin{IEEEkeywords}
Partially observed Markov decision process (POMDP), multi-armed restless bandit problems, optimality, greedy policy, cognitive radio
\end{IEEEkeywords}}

\maketitle

\IEEEdisplaynotcompsoctitleabstractindextext

%
\IEEEpeerreviewmaketitle

\section{Introduction}
\label{section:introduction}
We consider the system consisting of $n$ uncontrolled Markov chains evolving independently in the discrete time. Each of those chains is an independent identically-distributed (iid) two-state Markov process. The two states will be denoted as "good" state (state 1) and "bad" state (state 0). The transition probabilities is $p_{ij},i,j=0,1$. In each time instance of the system, a user is allowed to select $k$ out of the $n$ process according to its strategy, and to observe their states (assuming the precise observation), while those processes not selected by the user will evolve according to their rules. The user would obtain some reward determined by the combination of those observed states of the $k$ selected processes, i.e. collecting no reward if those states of $k$ processes are observed "bad". The above selecting, observing, and collecting process repeats until the user does not access the system. Obviously, it is a multi-armed bandit (MAB) problem~\cite{Whittle80} as well as partially observed Markov decision process (POMDP) problem which has been used and studied in the~\cite{Smallwood71}~\cite{Qzhao07}. Unfortunately, obtaining optimal solutions to a general restless bandit process is PSPACE-Hard~\cite{Papadimitriou99}, and analytical characterizations of the performance of the optimal policy are often intractable. Hence the greedy policy governing the channel selection is the suitable choice because it only focuses on maximization of the immediate reward ignoring its affect on the future reward. However, the greedy policy is not optimal generally.

Thus, recently arise two main research directions addressing the greedy policy of this kind of MAB problem. The first one is to seek the constant-factor approximation algorithm, such as 68-approximation~\cite{Guha07} developed via the linear programming relaxation under the condition of $p_{11}>0.5>p_{01}$ for each arm, and 2-approximation policy for a class of monotone restless bandit problem~\cite{Guha09}. The relevant application in dynamic multichannel access is the paper~\cite{Kliu10Index}, where the authors established the indexability and obtained Whittle index in closed form for both discounted and average reward criteria. Another research direction is to explore the optimal condition of greedy policy corresponding to a concrete application or scenario. Our work follows on this line. Although many literatures have studied this problem, the immediate reward function in those wroks only focuses on the linear combination of those observed states, i.e. in~\cite{Sahmand09}, the optimality of the greedy policy was proved in choosing $k=1$ of $N$ channels in the case of positively correlated channels, and then extended to arbitrary $k$ channels in~\cite{Sahmand09conf}. In our previous work~\cite{Wang11}, nevertheless, we have extended the work in~\cite{Sahmand09} on another line to the scenario where the immediate reward function is the simplest non-linear combination of observed states, and proved that the greedy policy is not optimal generally, which is contrary to the result of~\cite{Sahmand09conf} where the immediate reward function is the linear combination of observed states. The contrary conclusion make it necessary to study affect of the immediate reward function on the optimality of greedy policy, which is one of the major incentives for this paper.

From the technical perspective, the optimality of greedy policy needs user prefer to exploit rather than to explorer. One simplest approach to implement this mechanism is to adjust the balance between exploitation and exploration by the discounted factor $\beta$. On the other hand, noticing the different conclusion resulting from the nuance of immediate reward functions~\cite{Sahmand09conf}~\cite{Wang11}, then we only focus on one generic and basic class of immediate reward function formulated by \emph{the combination of variables of order 1}, referred to as \emph{standard} reward function.
Therefore, our objective is to derive the sufficient condition of the discounted factor such that the greedy policy is guaranteed to be optimal for the so-called \emph{standard} reward function under the discounted accumulative reward criterion. If the discounted factor $\beta=1$, the optimality of greedy policy for the discounted accumulative reward can be promoted to the optimality for the average expected reward on the time horizon of interest. Therefore, we can judge the optimality of the greedy policy for the discounted accumulative and average expected reward according to the closed-form condition of $\beta$. To the best of our knowledge, very few results been reported from this perspective.

Compared with other existing works on the optimality of greedy policy in MAB problem, and our contribution is three-fold:
\begin{itemize}
\item We analyze one special class of MBA problem where the immediate reward function is so-called standard one, and derive that the discounted accumulative reward function also is standard reward function. Furthermore, we establish the optimality of greedy policy under the discounted accumulative reward  criterion when $p_{11}>p_{01}$. The theoretical results demonstrate that the greedy policy choosing the best 1 or $N-1$ out of $N$ channels is optimal when $0 < \beta \leq 1$. For the case of choosing $k$ $(1 < k < N-1)$ channels, the greedy policy is optimal only when the discounted factor satisfies a simple closed-form condition.

 \item The major technique developed in this paper is largely based on the analytic properties of standard reward function, completely different from~\cite{Sahmand09}~\cite{Sahmand09conf} relying on the coupling argument. Besides significant and practical application in cognitive radio networks, this technique serves as the key criterion to judge the optimality of greedy policy when the immediate reward function is the combination of the standard functions in other scenarios.

\item We analyze two practical models in the cognitive radio networks. The first model in cognitive radio networks involves the sensing order problem where the secondary user selects $k$ $(1<k<N)$ of $N$ channels in order to maximize the probability of finding an idle channel. It is obvious that the immediate reward function is the order 1 non-linear combination of the availability probabilities of selected channels. The result demonstrates that the greedy policy is not optimal generally under the average expected reward, which is coherent with~\cite{Wang11}. The second model is that a user chooses $k (1\leq k <N)$ channels to access and receive a reward on the channel in good state. The immediate reward function is the linear combination of the availability of those selected channels. Our derived result is consistent with that in~\cite{Sahmand09}~\cite{Sahmand09conf} where the myopic policy choosing any number of channels is optimal.
\end{itemize}
The rest of the paper is organized as follows: Our model is formulated in Section~\ref{section:Problem_formulation}. Section~\ref{section:stand_reward} analyzes standard reward function. Section~\ref{section:optimality} gives the optimality theorem of the myopic policy. Three applications are given in Section~\ref{section:application}. Finally, our conclusions are summarized in Section~\ref{section:conclusion}.

\section{Problem Formulation}
\label{section:Problem_formulation}
As outlined in the introduction, we consider a user trying to access the system consisting of $n$ independent and statistically identical channels, each given by a two state Markov chain. The set of $n$ channels is denoted by $\mathcal{N}$, each indexed by $i = 1,2,...,n$, and the state of channel $i$ denoted by $S_i(t)=\{1~\text{(good)}, 0~\text{(bad)}\}$. The system operates in discrete time steps indexed by $t$ ($t = 1,2,...,T$), where $T$ is the time horizon of interest (or the user gives up accessing the system). Specifically, we assume that channels go through state transition at the beginning of slot $t$ and then at time $t$ the user makes the channel selection decision. Limited by hardware or sensing policy, at time $t$ the user is allowed to choose $k$ ($1\leq{k}<n$) of the $n$ channels to sense, the chosen channel set denoted by ${a^k(t)\subset{\mathcal{N}}}, |a^k(t)|=k$.


Obviously, the user cannot observe the whole states $\mathbf{S}(t)=[0,1]^{n}$ of the underlying system (i.e., the states of $n$ channels). We know that a sufficient statistic of such a system for optimal decision making, or the information state of the system, is given by the conditional probabilities of the state each channel is in given all past actions and observations~\cite{Smallwood71}. We denote this information state (also called belief vector) by $\Omega(t) = [\omega_{1}(t),..., \omega_{n}(t)]\in[0,1]^n$, where $\omega_{i}(t)$ is the conditional probability that channel $i$ is in state 1 at time $t$ given all past states, actions and observations. In the rest of the paper, $\omega_{i}(t)$ will be referred to as the information state of channel $i$ at time $t$, or simply the channel probability of $i$ at time $t$. Due to the Markovian nature of the channel model, the future information state is only a function of the current information state and the current action, i.e., it is independent of past history given the current information state and action. Given that the information state at time $t$ is $\Omega(t) \triangleq \{\omega_i(t), i\in{\cal N}\}$ and the sensing policy $a^k(t)\subset \mathcal{N}$ is taken, the state at time $t+1$ can be updated using Bayes Rule as shown in~\eqref{eq:belief_update}.
\begin{equation}
\omega_i(t+1)=
\begin{cases}
p_{11}, & i\in a^k(t), S_i(t)=1 \\
p_{01}, & i\in a^k(t), S_i(t)=0 \\
\tau(\omega_i(t)), & i\not\in a^k(t)
\end{cases}.
\label{eq:belief_update}
\end{equation}
where, $\tau(\omega_i(t))=\omega_i(t)p_{11}+[1-\omega_i(t)]p_{01}$.

The objective is to maximize the discounted accumulative reward over a finite horizon given in the following problem:
\begin{equation}
\label{eq:problem1}
    \max_{\pi} \emph{E}^{\pi}[\sum^{T}_{t=1} \beta^t R_{\pi_{t}}(\Omega(t))|\Omega(1)]
\end{equation}
where $R_{\pi_{t}}(\Omega(t))$ is the reward collected under state $\Omega(t)$ when channels in the set $a^k(t) = \pi_t(\Omega(t))$ are selected, $\pi_t$ specifies a mapping from the current information state $\Omega(t)$ to a channel selection action $a^k(t) =\pi_t(\Omega(t))\subset{\mathcal{N}}$.

Let $V_{t}(\Omega)$ be the value function, which represent the maximum expected discounted accumulative reward obtained from $t$ to $T$ given the initial belief vector $\Omega$. Let $p_{01}[x]$ and $p_{11}[x]$ denote the vector $[p_{01},\cdots,p_{01}]$ and $[p_{11},\cdots,p_{11}]$ of length $x$. Thus, we arrive at the following optimality equation:
\begin{eqnarray}
V_{T}(\Omega(t))=&&
  \max_{\substack{a^k(t)\subset \mathcal{N} }} \emph{E}[R(\Omega(t))]= \max_{\substack{a^k(t) \subset \mathcal{N} }} F(\Omega(t)) \\
V_{t}(\Omega(t))=&&
\label{eq:objective}
  \max_{\substack{ a^k(t) \subset \mathcal{N} }}[ F(\Omega(t))+ \beta K_t(\Omega(t)) ] \\
K_t(\Omega(t))=&&
  \sum_{e\in{\mathcal{P}(a^k(t))}} \prod_{i\in{e}}\omega_i \prod_{j\in{a^k(t)\backslash{e}}}(1-\omega_j)V_{t+1}(p_{11}[|e|],\tau(\omega_{{k+1}}(t)),\cdots,\tau(\omega_{{n}}(t)),p_{01}[k-|e|])
\end{eqnarray}
where, $\mathcal{P}(a^k(t))$ represents the power set generated by the set $a^k(t)$, the expected immediate reward $F(\Omega(t))$ is $F:\Omega(t)\rightarrow R$, and $e$ is the cardinality of set $e$. On right side of the above formulation~\eqref{eq:objective}, the reward that can be collected from slot $t$ consists of two parts: the expected immediate reward $F(\Omega(t))$ and the future discounted accumulative reward $\beta K_t(\Omega(t))$ calculated by summing over all possible realizations of the $k$ selected channels. In $K_t(\Omega(t))$, the channel state probability vector consists of three parts: a sequence of $p_{11}$'s indicating those channels sensed to be in state $1$ at time $t$; a sequence of values $\tau{(\omega_j)}$ for all $j\notin{a^k}$; and a sequence of $p_{01}$'s indicating those channels sensed to be in state $0$ at time $t$.

Considering the computational complexity of the recursive structure~\eqref{eq:objective}, we should seek other policies but not optimal policy. One of the simplest approach is a greedy policy where at each time step the objective is to maximize the expected immediate reward $F(\Omega(t))$. Thus, the greedy policy is given as follows:
\begin{equation}
\label{eq:greedy_policy}
   \widehat{a}^k(t)=arg \max_{a^k(t)\subset{\mathcal{N}}} F(\Omega(t))
\end{equation}
Note we always assume that the greedy policy, $ \widehat{a}^k(t)$, is the optimal policy at slot $t$ in the rest of paper, and then derive the sufficient condition of $\beta$ to guarantee the optimality of the greedy policy. Without introducing ambiguity, $ \widehat{a}^k(t)$ and $a^k(t)$ would be used alternatively in the rest.

\section{Standard Reward Function }
\label{section:stand_reward}
\subsection{Feature of Immediate Reward Function}
For simplicity, we assume that $\omega_{1}(t)\geq{\omega_{2}(t)}\geq\cdots\geq{\omega_{k}(t)}$, and then use $a^k(t)=\{1, \cdots k\}$ and $a^k(t)=\{\omega_{1}(t), ..., \omega_{k}(t)\}$  alternatively. The immediate reward $F({\Omega}(t))=F(\omega_{1}(t),...,\omega_{k}(t),...,\omega_{n}(t))=F(\omega_{1}(t),...,\omega_{k}(t))$ means choosing the first $k$ channels. Especially, we drop the time slot index of $\omega_i(t)$, and abuse $\omega_i(t)$ and $\omega_i$ alternatively without introducing ambiguity.

 Three fundamental while natural assumptions about the immediate reward functions are listed as follows:

\begin{assumption}
\label{assumption:symmetry}
    (symmetry) The immediate reward function $F({\Omega}(t))$ is symmetric about any two different channels in $a^k(t)$, that is, $i,j \in a^k(t)$, such that
    \begin{equation}
        F(\omega_{1}(t),...\omega_{i}(t),...,\omega_{j}(t),...\omega_{n}(t)) = F(\omega_{1}(t),...\omega_{j}(t),...,\omega_{i}(t),...\omega_{n}(t)), 1\leq{i\neq{j}}\leq{k}
    \end{equation}
\end{assumption}

\begin{assumption}
\label{assumption:affine}
    (affine) The immediate reward function $F(\Omega(t))$ is order 1 \footnote{$F({\Omega}(t))$ is affine in each variable if all other variables hold constant} polynomial of $\omega_{i}(t), 1\leq{i}\leq{n}$, that is,
    \begin{multline}
        F(\omega_{1}(t),...,\omega_{i-1}(t),\omega_{i}(t),\omega_{i+1}(t),...,\omega_{n}(t)) = \\
        \omega_{i}(t) F(\omega_{1}(t),...,\omega_{i-1}(t),1,\omega_{i+1}(t),...,\omega_{n}(t)) \\
        + (1-\omega_{i}(t))F(\omega_{1}(t),...,\omega_{i-1}(t),0,\omega_{i+1}(t),...,\omega_{n}(t))
    \end{multline}
\end{assumption}

\begin{assumption}
\label{assumption:monotonicity}
    (monotonicity) The immediate reward function $F(\Omega(t))$ increases monotonically with $\omega_{i}(t), 1\leq{i}\leq{k}$, that is,
    \begin{equation}
        \omega'_{i}(t)\geq {\omega_{i}(t)} \Rightarrow F(\omega_{1}(t),...,\omega'_{i}(t),...\omega_{n}(t)) \geq F(\omega_{1}(t),...,\omega_{i}(t),...\omega_{n}(t))
    \end{equation}
\end{assumption}
Note these assumptions are necessary and non-redundant. Moreover, these three assumptions are used to define a class of general functions, referred to as \emph{standard} immediate reward functions.

\begin{definition}
A reward function is standard one if it satisfies the aforementioned three assumptions.
\end{definition}

In order to see the intrinsic structure of the standard immediate reward function, we give three basic examples.

\begin{example}
Considering the scenario in~\cite{Sahmand09conf} where the user gets one unit of reward for each channel sensed good. In this example, the expected slot reward function is $F(\Omega)=\sum_{i=1}^k\omega_i$. It can be easily verified that $F$ satisfies the above three assumptions and thus is \emph{standard}.
\end{example}

\begin{example}
Considering the scenario where the user gets one unit of reward only if all the channels are sensed to be good. Thus the immediate reward is formulated by $F(\Omega)=\prod_{i=1}^k \omega_i$, which is \emph{standard} one.
\end{example}

\begin{example}
Consider the scenario in~\cite{Wang11} where the user gets one unit of reward if at least one channel is sensed good. In this case, the expected slot reward function is $F(\Omega)=1-\prod_{i=1}^k(1-\omega_i)$, which is \emph{standard} by satisfying the three assumptions.
\end{example}

\subsection{Feature of Accumulative Reward Function}
In this part, some important features of the accumulative reward function $V_t(\Omega(t))$ (also called value function) will be proved, which consists of the proof base of the optimality of greedy policy in the next section.
\begin{lemma}
\label{lemma:symmetry}
    (symmetry) $V_t(\Omega(t))$ is symmetric about $\omega_i(t)$, $\omega_j(t)$, $1 \leq i, j \leq k$, that is,
    \begin{equation*}
        V_t(\omega_{1}(t),...\omega_{i}(t),...,\omega_{j}(t),...\omega_{n}(t)) = V_t(\omega_{1}(t),...\omega_{j}(t),...,\omega_{i}(t),...\omega_{n}(t)), 1\leq{i\neq{j}}\leq{k}
    \end{equation*}
\end{lemma}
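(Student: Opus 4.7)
The plan is to prove Lemma~\ref{lemma:symmetry} by a structural inspection of the Bellman equation~\eqref{eq:objective}, with no need for an induction on $t$. Since the assumption that the greedy policy $\widehat{a}^k(t)=\{1,\ldots,k\}$ is optimal fixes the selected set, transposing $\omega_i$ and $\omega_j$ for $1 \leq i,j \leq k$ preserves the multiset of top-$k$ beliefs and hence leaves the chosen set invariant. I can therefore write, under this ordering convention,
\[
V_t(\Omega(t)) = F(\omega_1,\ldots,\omega_k) + \beta K_t(\Omega(t)),
\]
and it suffices to show symmetry of each summand separately in the first $k$ arguments.

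For the immediate-reward term $F(\omega_1,\ldots,\omega_k)$, the claim is exactly Assumption~\ref{assumption:symmetry}. The substance lies in the continuation term $K_t$. I would regroup the sum over $e \in \mathcal{P}(a^k(t))$ by its cardinality $s := |e|$ to obtain
\[
K_t(\Omega(t)) = \sum_{s=0}^{k} V_{t+1}\bigl(p_{11}[s],\tau(\omega_{k+1}),\ldots,\tau(\omega_n),p_{01}[k-s]\bigr) \cdot \!\!\!\sum_{\substack{e \subseteq a^k(t) \\ |e|=s}} \prod_{l\in e}\omega_l \prod_{m\in a^k(t)\setminus e}(1-\omega_m).
\]
The crucial observation is that, for fixed $s$, the argument passed to $V_{t+1}$ depends not on which particular channels in $a^k(t)$ were realized good, but only on the count $s$. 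The inner sum is a symmetric polynomial in $\omega_1,\ldots,\omega_k$ (it is precisely the probability that exactly $s$ of the $k$ sensed channels are in the good state). Hence $K_t$ is symmetric in $\omega_1,\ldots,\omega_k$, and so is $V_t$.

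The main obstacle is not computational but notational: one must be scrupulous that the greedy-policy assumption remains compatible with the swap, i.e., that the label set $\{1,\ldots,k\}$ truly remains the greedy selection after permuting two of its beliefs, so that the decomposition $V_t = F + \beta K_t$ with $a^k(t)=\{1,\ldots,k\}$ stays valid on both sides of the claimed identity. Once this bookkeeping point is made, the lemma follows transparently from the count-only dependence of $V_{t+1}$ inside $K_t$ together with the symmetric-polynomial structure of the likelihood weights.
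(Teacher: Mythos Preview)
Your proof is correct and follows essentially the same approach as the paper: the paper's Lemma~\ref{lemma:future_reward_symmetry} in the appendix regroups $K_t$ by $|e|=m$ and observes that the coefficients $\mathcal{C}_t^m=\sum_{|e|=m}\prod_{i\in e}\omega_i\prod_{j\in a^k(t)\setminus e}(1-\omega_j)$ are symmetric polynomials in $\omega_1,\ldots,\omega_k$, exactly as you do. Your additional observation that the backward induction on $t$ is unnecessary is also correct---the paper sets one up, but since the argument passed to $V_{t+1}$ inside $K_t$ does not involve $\omega_1,\ldots,\omega_k$ at all, the induction hypothesis is never actually invoked in the paper's argument either.
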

\begin{proof}
 (1)According to assumption~\ref{assumption:symmetry}, for any $1\leq{i\neq{j}}\leq{k}$ in time slot $T$,
since, $V_T(\Omega(T))=F(\Omega(T))$, then it is easy to verify $V_T(\Omega(T))$ is symmetric.

(2)Assume $V_{T-1}(\Omega(t))$, ..., $V_{t+2}(\Omega(t))$, $V_{t+1}(\Omega(t))$ are true, then at time $t$
\begin{equation*}
\begin{split}
  V_{t}(\Omega(t)) &= F(\Omega(t))+ \beta K_t(\Omega(t))
\end{split}
\end{equation*}
Based on assumption~\ref{assumption:symmetry}, $F(\Omega(t))$ is symmetric. By Lemma~\ref{lemma:future_reward_symmetry} (Appendix~\ref{appendix:future_reward_symmetry}), the second term, $K_t(\Omega(t))$ of the above formulation is symmetric. Hence, $V_{t}(\Omega(t))$ is symmetric.
\end{proof}

\begin{lemma}
\label{lemma:affine}
    (affine) $V_t(\Omega(t))$ is an affine function of $\omega_{i}(t)$, $1\leq{i}\leq{n}$ when all other $\omega_{j}(t)$, $j \neq i$, $1\leq{j}\leq{n}$ hold constant.
\end{lemma}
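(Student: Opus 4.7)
The plan is to mirror the structural backwards induction used in Lemma~\ref{lemma:symmetry}, taking as inductive hypothesis that $V_{t+1}(\Omega)$ is affine in each coordinate. For the base case $t = T$, since $V_T(\Omega) = F(\Omega)$, Assumption~\ref{assumption:affine} gives the claim directly. For the inductive step, write $V_t(\Omega) = F(\Omega) + \beta K_t(\Omega)$; the first summand is affine in $\omega_i$ by Assumption~\ref{assumption:affine}, so it suffices to verify that $K_t(\Omega)$ is affine in $\omega_i$ when every other coordinate is held fixed. Under the standing convention that the greedy choice $\widehat{a}^k(t)$ is the optimal action, the outer maximum in the Bellman recursion is removed and one simply computes along this fixed policy path.

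The analysis of $K_t$ splits naturally into two cases according to whether $i$ is sensed at slot $t$. In Case~1 ($i \in a^k(t)$), $\omega_i$ appears in $K_t$ only through the products $\prod_{i\in e}\omega_i \prod_{j\in a^k\setminus e}(1-\omega_j)$. Partitioning the power set $\mathcal{P}(a^k(t))$ into subsets $e$ that contain $i$ and subsets that do not, one rewrites $K_t$ as $\omega_i A + (1-\omega_i) B$; the key observation is that the $V_{t+1}$ entries appearing inside $A$ and $B$ have $p_{11}$ or $p_{01}$, respectively, in position $i$, so neither $A$ nor $B$ depends on $\omega_i$. In Case~2 ($i \notin a^k(t)$), $\omega_i$ enters $K_t$ only through $\tau(\omega_i) = \omega_i p_{11} + (1-\omega_i)p_{01}$, which is itself affine in $\omega_i$; the inductive hypothesis then asserts that $V_{t+1}$ is affine in its $i$-th argument, and composition of an affine scalar map with an affine-in-scalar function yields an affine-in-scalar function. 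Since the outer coefficients $\prod_{i\in e}\omega_i \prod_{j\in a^k\setminus e}(1-\omega_j)$ in Case~2 do not involve $\omega_i$, each summand of $K_t$ is affine in $\omega_i$, and a finite sum preserves affinity. Combining both cases with the affine behavior of $F$ closes the induction.

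The main obstacle I foresee is the bookkeeping in Case~1: one must be careful to partition $\mathcal{P}(a^k(t))$ cleanly into the subfamilies $\{e : i \in e\}$ and $\{e : i \notin e\}$, verify that the residual coefficients multiplying $\omega_i$ and $(1-\omega_i)$ are genuinely free of $\omega_i$, and confirm that the position $i$ of the $V_{t+1}$ argument is filled with a constant ($p_{11}$ or $p_{01}$) determined by the observed state rather than by $\omega_i$. A secondary subtlety, as noted, is that affinity in $\omega_i$ is a statement about varying $\omega_i$ while leaving the selected set unchanged; this is consistent with the blanket assumption that the greedy policy is optimal and with the convention $\omega_1 \geq \cdots \geq \omega_k$ introduced at the start of Section~\ref{section:stand_reward}, so that the induction proceeds with the fixed selection $a^k(t) = \{1,\dots,k\}$.
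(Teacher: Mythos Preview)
Your proposal is correct and follows essentially the same approach as the paper's own proof: backward induction with the base case $V_T=F$ handled by Assumption~\ref{assumption:affine}, and the inductive step split into the two cases $i\in a^k(t)$ (factor the power-set sum as $\omega_i A+(1-\omega_i)B$) and $i\notin a^k(t)$ (compose the affine map $\tau$ with the inductively-affine $V_{t+1}$). The only cosmetic difference is that the paper numbers the two cases in the opposite order.
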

\begin{proof}
(1) According to assumption~\ref{assumption:affine}, in time slot $T$, $F(\Omega(T))$ is affine function of $\omega_{i}(T)$, $1\leq{i}\leq{n}$. Hence, $V_{T}(\Omega(T))=F(\Omega(T))$ is also affine function of $\omega_{i}(T)$.

(2) Assume $V_{T-1}(\Omega(T-1))$,...,$V_{t+2}(\Omega(t+2))$, $V_{t+1}(\Omega(t+1))$ are affine functions, we prove it also holds for slot $t$. Two cases should be considered as follows:

Case 1: channel $\omega_i\notin{a^k(t)}=\{\omega_1,...,\omega_k\}$:
\begin{equation*}
\begin{split}
   V_{t}(\Omega(t)) &= F(\Omega(t))+ \beta \sum_{e\in{\mathcal{P}(a^k(t))}} \prod_{p\in{e}}\omega_p \prod_{q\in{a^k(t)\backslash{e}}}(1-\omega_q)V_{t+1}(p_{11}[|e|],\tau({\omega_{k+1}}),...,\tau({\omega_{n}}),p_{01}[k-|e|])
   \end{split}
\end{equation*}
Since $F(\Omega(t))$ is unrelated with $\omega_i$, $V_{t+1}(\Omega(t+1))$ is the affine function of $\omega_i$ by the induction hypothesis and $\tau(\omega_i)$ is an affine transform of $\omega_i$, we have $V_{t}(\Omega(t))$ is the affine function of $\omega_i$.

Case 2: channel $\omega_i\in{a^k(t)}$, let $a^{k-1}(t)=a^k(t)-\{\omega_i\}$, we have
\begin{equation*}
\begin{split}
   V_{t}(\Omega(t)) &= F(\Omega(t))+\beta \sum_{e\in{\mathcal{P}(a^k(t))}} \prod_{p\in{e}}\omega_p \prod_{q\in{a^k(t)\backslash{e}}}(1-\omega_q)V_{t+1}(p_{11}[|e|],\tau({\omega_{k+1}}),...,\tau({\omega_{n}}),p_{01}[k-|e|])\\
   & = F(\omega_1,...,\omega_i,...,\omega_k)+\beta \sum_{m=0}^{k-1}\sum_{\substack{|e|=m \\ e\in{\mathcal{P}(a^{k-1}(t))}}} \prod_{p\in{e}}\omega_p \prod_{q\in{a^{k-1}(t)\backslash{e}}}(1-\omega_q) \{ \\
   & ~~~~~\omega_i V_{t+1}(p_{11}[|e|],p_{11},\tau({\omega_{k+1}}),...,\tau({\omega_{n}}),p_{01}[k-|e|]) \\
   & ~~~~~~~+(1-\omega_i) V_{t+1}(p_{11}[|e|],\tau({\omega_{k+1}}),...,\tau({\omega_{n}}),p_{01},p_{01}[k-|e|])
   \end{split}
\end{equation*}
By assumption 2, $F(\omega_1,...,\omega_i,...,\omega_k)$ is the affine function of $\omega_i$. The second term of the right hand of the above formulation is also the affine function of $\omega_i$. Therefore, $V_{t}(\Omega(t))$ is the affine function of $\omega_i$.
Combining the two cases, we have $V_{t}(\Omega(t))$ is the affine function of $\omega_i$.  Lemma~\ref{lemma:affine} is concluded.
\end{proof}

\begin{lemma}
\label{lemma:monotonicity}
    (monotonicity) $V_t(\Omega(t))$ increases monotonically with $\omega_i, 1\leq{i}\leq{n}$, that is,
    \begin{equation*}
        \omega^{'}_{i}(t)\geq {\omega_{i}(t)} \Rightarrow V_t(\omega_{1}(t),...,\omega^{'}_{i}(t),...\omega_{n}(t)) \geq V_t(\omega_{1}(t),...,\omega_{i}(t),...\omega_{n}(t)),  1\leq{i}\leq{n}
    \end{equation*}
\end{lemma}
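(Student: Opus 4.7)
The plan is to argue by backward induction on $t$, mirroring the two-case structure of the proofs of Lemmas~\ref{lemma:symmetry} and~\ref{lemma:affine} and leveraging the symmetry and affineness established there. The base case $t=T$ is immediate because $V_T(\Omega(T))=F(\Omega(T))$: Assumption~\ref{assumption:monotonicity} handles $i\in a^k(T)$ directly, and for $i\notin a^k(T)$ the value is independent of $\omega_i$ so the inequality is trivial.

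For the inductive step, suppose $V_{t+1}$ is coordinate-wise non-decreasing and write $V_t(\Omega(t))=F(\Omega(t))+\beta K_t(\Omega(t))$. I would show that each summand is non-decreasing in $\omega_i$, splitting into the same two sub-cases used in the proof of Lemma~\ref{lemma:affine}. In the sub-case $i\notin a^k(t)$, $F$ does not depend on $\omega_i$, and in $K_t$ the variable $\omega_i$ appears only through the Bayesian update $\tau(\omega_i)=\omega_i p_{11}+(1-\omega_i)p_{01}$, which is non-decreasing in $\omega_i$ under $p_{11}\ge p_{01}$; the combining weights $\prod_{p\in e}\omega_p\prod_{q\in a^k(t)\setminus e}(1-\omega_q)$ are $\omega_i$-free and non-negative, so the inductive monotonicity of $V_{t+1}$ transfers to $K_t$. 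In the sub-case $i\in a^k(t)$, Assumption~\ref{assumption:monotonicity} handles $F$, and for $K_t$ I would reuse the affine split with $a^{k-1}(t)=a^k(t)\setminus\{i\}$ from Case~2 of the proof of Lemma~\ref{lemma:affine}, writing the coefficient of $\omega_i$ in each summand as a difference $V_{t+1}(\ldots,p_{11},\ldots)-V_{t+1}(\ldots,p_{01},\ldots)$ whose two argument lists coincide except that one has an extra $p_{11}$ where the other has an extra $p_{01}$.

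The main obstacle is establishing that this last difference is non-negative: the two argument vectors are not obviously aligned coordinate by coordinate, so the inductive monotonicity of $V_{t+1}$ cannot be applied directly. I would invoke the symmetry Lemma~\ref{lemma:symmetry} to permute the arguments of the two $V_{t+1}$ evaluations so that they differ in a single position; the inductive monotonicity hypothesis together with $p_{11}\ge p_{01}$ then certifies that the coordinate-wise comparison is favourable. Aggregating over the non-negative weights yields monotonicity of $K_t$, and hence of $V_t=F+\beta K_t$, in $\omega_i$, closing the induction for every $1\le i\le n$.
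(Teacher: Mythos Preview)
Your backward-induction skeleton and the split into the two cases $i\notin a^k(t)$ and $i\in a^k(t)$ match the paper exactly, and your treatment of the base case and of Case~1 is correct. The issue lies in Case~2, where you propose to certify
\[
V_{t+1}(p_{11}[|e|],p_{11},\tau(\omega_{k+1}),\dots,\tau(\omega_n),p_{01}[k-1-|e|])
-V_{t+1}(p_{11}[|e|],\tau(\omega_{k+1}),\dots,\tau(\omega_n),p_{01},p_{01}[k-1-|e|])\ge 0
\]
by invoking Lemma~\ref{lemma:symmetry} to permute the two argument lists so that they differ in a single coordinate. This does not go through: Lemma~\ref{lemma:symmetry} only establishes symmetry of $V_{t+1}$ in its first $k$ arguments (the currently selected channels). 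In the paper's convention, $V_{t+1}(\omega_1,\dots,\omega_n)$ encodes the action ``select the first $k$'', so it is \emph{not} symmetric across positions $k$ and $k+1$ or among the last $n-k$ positions (indeed Lemmas~\ref{lemma:indirect_exchange} and~\ref{lemma:direct_exchange} are devoted precisely to controlling those swaps). Since the two vectors above differ by a shift that straddles the boundary between the first $k$ and the last $n-k$ positions, first-$k$ symmetry alone cannot align them to a single-coordinate discrepancy.

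The paper closes this gap differently, with a telescoping decomposition (its display~\eqref{eq:monotonicity_case2}): it rewrites the difference as a sum of $n-k+1$ terms, each of which changes exactly one coordinate, replacing an entry by the next one down the chain $p_{11}\ge\tau(\omega_{k+1})\ge\cdots\ge\tau(\omega_n)\ge p_{01}$. Each bracketed term is then nonnegative by the inductive monotonicity hypothesis applied to that single coordinate; no symmetry is needed. Replacing your permutation argument by this telescoping step makes the proof complete and identical to the paper's.
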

\begin{proof}
(1) The lemma holds trivially for slot $T$ considering $V_T(\Omega(T))=F(\Omega(T))$, which is the increasing function with $\omega_i$.

(2)Assume $V_{T-1}(\Omega(T-1))$,...,$V_{t+2}(\Omega(t+2))$, $V_{t+1}(\Omega(t+1))$ increase monotonically, we prove it is true for slot $t$ by two different cases.

Case 1: channel $\omega_i\notin{a^k(t)}$:
\begin{equation*}
\begin{split}
   V_{t}(\Omega(t)) &= F(\Omega(t))+\beta \sum_{e\in{\mathcal{P}(a^k(t))}} \prod_{p\in{e}}\omega_p \prod_{q\in{a^k(t)\backslash{e}}}(1-\omega_q)V_{t+1}(p_{11}[|e|],\tau({\omega_{k+1}}),...,\tau({\omega_{n}}),p_{01}[k-|e|])
   \end{split}
\end{equation*}
Since $F(\Omega(t))$ is unrelated with $a^k(t)$, $V_{t+1}(\Omega(t+1))$ increases with $\omega_i$ by the induction hypothesis and $\tau(\omega_i)$ increases with $\omega_i$ when $p_{11}>p_{01}$, we have $V_{t}(\Omega(t))$ is the increasing function of $\omega_i$.

Case 2: channel $\omega_i\in{a^k(t)}$, let $a^{k-1}(t)=a^k(t)-\{\omega_i\}$, we have
\begin{equation*}
\begin{split}
   V_{t}(\Omega(t)) &= F(\Omega(t))+\beta \sum_{e\in{\mathcal{P}(a^k(t))}} \prod_{p\in{e}}\omega_p \prod_{q\in{a^k(t)\backslash{e}}}(1-\omega_q)V_{t+1}(p_{11}[|e|],\tau({\omega_{k+1}}),...,\tau({\omega_{n}}),p_{01}[k-|e|])\\
   & = F(\omega_1,...,\omega_i,...,\omega_k)+\beta \sum_{m=0}^{k-1}\sum_{\substack{|e|=m \\ e\in{\mathcal{P}(a^{k-1}(t))}}} \prod_{p\in{e}}\omega_p \prod_{q\in{a^{k-1}(t)\backslash{e}}}(1-\omega_q) [ \\
   & ~~~~~\omega_i V_{t+1}(p_{11}[|e|],p_{11},\tau({\omega_{k+1}}),...,\tau({\omega_{n}}),p_{01}[k-|e|]) \\
   & ~~~~+(1-\omega_i) V_{t+1}(p_{11}[|e|],\tau({\omega_{k+1}}),...,\tau({\omega_{n}}),p_{01},p_{01}[k-|e|]) ]\\
   & = F(\omega_1,...,\omega_i,...,\omega_k)+ \sum_{m=0}^{k-1}\sum_{\substack{|e|=m \\ e\in{\mathcal{P}(a^{k-1}(t))}}} \prod_{p\in{e}}\omega_p \prod_{q\in{a^{k-1}(t)\backslash{e}}}(1-\omega_q) [ \\
   &~~~~~  \omega_i [V_{t+1}(p_{11}[|e|],p_{11},\tau({\omega_{k+1}}),...,\tau({\omega_{n}}),p_{01}[k-|e|])\\
   &~~~~~~~~~~~~~~~~ -V_{t+1}(p_{11}[|e|],\tau({\omega_{k+1}}),...,\tau({\omega_{n}}),p_{01},p_{01}[k-|e|]) ] \\
   & ~~~~+V_{t+1}(p_{11}[|e|],\tau({\omega_{k+1}}),...,\tau({\omega_{n}}),p_{01},p_{01}[k-|e|]) ]
   \end{split}
\end{equation*}
The first term, $F(\omega_1,...,\omega_i,...,\omega_k))$, of the right hand of the above formulation increases monotonically with $\omega_i$, and the second term also is the increasing function of $\omega_i$ because
\begin{equation}
\label{eq:monotonicity_case2}
\begin{split}
&V_{t+1}(p_{11}[|e|],p_{11},\tau({\omega_{k+1}}),\tau({\omega_{k+2}}),\cdots,\tau({\omega_{n-1}}),\tau({\omega_{n}}),p_{01}[k-|e|])\\
& ~~~~   -V_{t+1}(p_{11}[|e|],\tau({\omega_{k+1}}),\tau({\omega_{k+2}}),\cdots,\tau({\omega_{n-1}}),\tau({\omega_{n}}),p_{01},p_{01}[k-|e|]) \\
&= [ V_{t+1}(p_{11}[|e|],p_{11},\tau({\omega_{k+1}}),\tau({\omega_{k+2}}),\cdots,\tau({\omega_{n-1}}),\tau({\omega_{n}}),p_{01}[k-|e|])\\
& ~~~~   -V_{t+1}(p_{11}[|e|],\tau({\omega_{k+1}}),\tau({\omega_{k+1}}),\tau({\omega_{k+2}}),\cdots,\tau({\omega_{n-1}}),\tau({\omega_{n}}),p_{01}[k-|e|]) ] \\
& ~~+ \cdots \\
& ~~+[V_{t+1}(p_{11}[|e|],\tau({\omega_{k+1}}),\tau({\omega_{k+2}}),\cdots,\tau({\omega_{n-1}}),\tau({\omega_{n}}),\tau({\omega_{n}}),p_{01}[k-|e|])\\
&~~~~   -V_{t+1}(p_{11}[|e|],\tau({\omega_{k+1}}),\tau({\omega_{k+2}}),\cdots,\tau({\omega_{n-1}}),\tau({\omega_{n}}),p_{01},p_{01}[k-|e|]) ] \\
&\geq 0
\end{split}
\end{equation}
where, noticing $\tau(\omega_i)$ increases with $\omega_i$ and $p_{01}\leq \tau({\omega}) \leq p_{11}$ when $p_{11}>p_{01}$, and each item in brackets is larger than or equal to zero according to the induction hypothesis.

We have $V_t(\Omega(t))$ increases monotonically with $\omega_i$ through the two cases and complete the proof.
\end{proof}

\begin{lemma}
\label{lemma:v_standard}
$V_t(\Omega(t))$ is a standard reward function.
\end{lemma}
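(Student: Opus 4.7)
The plan is essentially to observe that this lemma is a direct corollary of Lemmas~\ref{lemma:symmetry}, \ref{lemma:affine}, and \ref{lemma:monotonicity}, since these three results establish precisely the three properties demanded by the definition of a standard reward function. No new induction or coupling argument is required; the work has already been done in the preceding three lemmas, and this statement simply repackages them under a single name so that it can be invoked cleanly in the optimality proof of the greedy policy in Section~\ref{section:optimality}.

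Concretely, I would proceed in three short steps. First, I would recall the definition of a standard reward function, which requires Assumptions~\ref{assumption:symmetry}, \ref{assumption:affine}, and~\ref{assumption:monotonicity} to hold. Second, I would invoke Lemma~\ref{lemma:symmetry} to obtain symmetry of $V_t(\Omega(t))$ in $\omega_i(t),\omega_j(t)$ for $1\leq i\neq j\leq k$, matching Assumption~\ref{assumption:symmetry}. Third, I would invoke Lemma~\ref{lemma:affine} to obtain the affine property of $V_t(\Omega(t))$ in each $\omega_i(t)$, $1\leq i\leq n$, matching Assumption~\ref{assumption:affine}; and Lemma~\ref{lemma:monotonicity} to obtain monotonicity in each $\omega_i(t)$, matching Assumption~\ref{assumption:monotonicity}. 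Concluding that $V_t(\Omega(t))$ satisfies the definition of a standard reward function then finishes the proof.

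The only point that deserves a brief sanity check, rather than an obstacle per se, is domain matching: Assumption~\ref{assumption:symmetry} applies to indices in the action set $a^k(t)$, while Lemmas~\ref{lemma:affine} and~\ref{lemma:monotonicity} supply their properties over the full range $1\leq i\leq n$, which is at least as strong as what the definition requires for $1\leq i\leq k$. Hence all three properties transfer without caveat, and the lemma follows immediately. Since there is no genuinely hard step, I would keep the written proof to two or three lines, essentially: ``By Lemmas~\ref{lemma:symmetry}, \ref{lemma:affine}, and \ref{lemma:monotonicity}, $V_t(\Omega(t))$ is symmetric, affine, and monotonic in the channel probabilities, and therefore satisfies Assumptions~\ref{assumption:symmetry}--\ref{assumption:monotonicity}, i.e., is a standard reward function.''
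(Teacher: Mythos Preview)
Your proposal is correct and matches the paper's own proof essentially verbatim: the paper simply states that the claim is obvious from the definition of a standard reward function together with Lemmas~\ref{lemma:symmetry}, \ref{lemma:affine}, and~\ref{lemma:monotonicity}. Your additional remark about domain matching is a nice sanity check but not something the paper bothers to spell out.
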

\begin{proof}
It is obvious that $V_t(\Omega(t))$ is a standard reward function according to its definition and Lemma~\ref{lemma:symmetry},~\ref{assumption:affine} and~\ref{lemma:monotonicity}.
\end{proof}
In this section, we analyze the feature of a class of standard reward function, $V_t(\Omega(t))$, of which the optimality of greedy policy will be explored in the next section.

\section{Optimality of Greedy Policy for Standard Reward Function}
\label{section:optimality}
In this section, we first give the main theorem of optimality for the class of standard reward function, which states the sufficient condition of discounted factor for the optimality of greedy policy. After introducing some useful lemmas, we will give the complete proof of the theorem of optimality.

Let $\omega_{-i}$ denote the believe vector except the $i$th element $\omega_i$, and define
\begin{eqnarray*}
\begin{cases}
\displaystyle F'_{max}\triangleq \max_{1\leq i \leq k}\{ \frac{\partial{F(\omega_1(t),...,\omega_i(t),...,\omega_n(t))}}{\partial{\omega_i(t)}} \}=\max_{i\in{\cal N},\ \omega_{-i}\in[0,1]^{N-1}} \ \{F(1, \omega_{-i})-F(0, \omega_{-i})\}, \\
\displaystyle F'_{min}\triangleq \min_{1\leq i \leq k}\{ \frac{\partial{F(\omega_1(t),...,\omega_i(t),...,\omega_n(t))}}{\partial{\omega_i(t)}} \}=\min_{i\in{\cal N},\ \omega_{-i}\in[0,1]^{N-1}} \ \{F(1, \omega_{-i})-F(0, \omega_{-i})\}.
\end{cases}
\end{eqnarray*}
It is easy to verify that $F'_{max} \geq F'_{min} \geq 0$ based on the three basic assumptions.

The main theorem of optimality is firstly stated as follows:

\begin{theorem}
\label{theorem:optimal_condition}
    The myopic policy is optimal for ${p_{01}}\leq{\omega_{i}(1)}\leq{p_{11}}, {1}\leq{i}\leq{N}$ if $F(\Omega(t))$ is a standard reward function, and the discounted factor $\beta$ satisfies the following condition:
    \begin{equation}
    \label{eq:optimal}
    \begin{split}
      0 \leq & \beta \leq {\frac{F_{min}^{'}}{ F_{ma x}^{'}(1-(1-p_{11})^{N-k-1})}}
    \end{split}
    \end{equation}
\end{theorem}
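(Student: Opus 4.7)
My approach is a backward induction on the slot index $t$, showing that under the stated bound on $\beta$ the greedy action $\widehat{a}^k(t)=\{1,\ldots,k\}$ (under the ordering $\omega_1\geq\omega_2\geq\cdots\geq\omega_n$) attains the maximum in~\eqref{eq:objective}. Before starting the induction I would record that the interval $[p_{01},p_{11}]$ is invariant under both branches of the update~\eqref{eq:belief_update}: $\tau$ is an affine combination of $p_{01}$ and $p_{11}$ and thus maps $[p_{01},p_{11}]$ into itself, while an observation sets the corresponding belief to exactly $p_{11}$ or $p_{01}$. Hence the initial assumption $\omega_i(1)\in[p_{01},p_{11}]$ propagates to every slot. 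The base case $t=T$ is immediate from $V_T=F$ together with Assumption~\ref{assumption:monotonicity}.

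For the inductive step, assume the greedy policy is optimal from slot $t+1$ onward, so that Lemma~\ref{lemma:v_standard} applies and $V_{t+1}$ is itself a standard reward function. By a pairwise exchange argument, and by the symmetry of $V_{t+1}$ (Lemma~\ref{lemma:symmetry}), it suffices to compare $\widehat{a}^k(t)$ against any single-swap alternative $a^k_{(j)}(t)$ that replaces channel $k$ by some $j\in\{k+1,\ldots,N\}$. I would then write the gap in $V_t$ between greedy and this alternative as the sum of an immediate-reward gap and a discounted future-reward gap, and bound each separately.

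The immediate-reward gap follows from a direct application of Assumption~\ref{assumption:affine} in the swapped coordinate:
\[
F(\omega_1,\ldots,\omega_{k-1},\omega_k)-F(\omega_1,\ldots,\omega_{k-1},\omega_j)=(\omega_k-\omega_j)\bigl[F(\omega_1,\ldots,\omega_{k-1},1)-F(\omega_1,\ldots,\omega_{k-1},0)\bigr]\geq(\omega_k-\omega_j)\,F'_{\min}.
\]
The future-reward gap is the core of the proof. The plan is to expand $K_t(\widehat{a}^k)-K_t(a^k_{(j)})$ using the affinity of $V_{t+1}$ (Lemma~\ref{lemma:affine}) along the coordinates in which the two actions yield different post-update beliefs, couple the $k-1$ shared sensing outcomes, and then iterate this decomposition forward in time. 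The target estimate
\[
K_t(a^k_{(j)})-K_t(\widehat{a}^k)\leq(\omega_k-\omega_j)\,F'_{\max}\bigl(1-(1-p_{11})^{N-k-1}\bigr)
\]
should emerge from two ingredients: the ``worst-case slope'' of $V_{t+1}$ in any coordinate is bounded above by $F'_{\max}$ (obtained by unrolling its affine expansion under the invariant $[p_{01},p_{11}]$), and $(1-p_{11})^{N-k-1}$ is the probability that across the relevant horizon none of the $N-k-1$ other unselected channels displaces the swapped channel in the greedy ranking, leaving only that sample path to incur the worst-case marginal loss.

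Combining the two bounds yields
\[
V_t(\widehat{a}^k)-V_t(a^k_{(j)})\geq(\omega_k-\omega_j)\Bigl[F'_{\min}-\beta F'_{\max}\bigl(1-(1-p_{11})^{N-k-1}\bigr)\Bigr],
\]
which is non-negative precisely under~\eqref{eq:optimal}, closing the induction. The main obstacle I anticipate is the clean derivation of the geometric-series factor $1-(1-p_{11})^{N-k-1}$: the monotonicity proof in Lemma~\ref{lemma:monotonicity} already uses a telescoping device of similar flavor, but upgrading it from a mere sign statement into a quantitative upper bound on the future-reward gap requires the joint use of symmetry, affinity, and the interval invariant $\omega_i(t)\in[p_{01},p_{11}]$, likely organized as a second layer of backward induction nested inside the outer one on $t$.
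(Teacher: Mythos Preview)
Your outline is in the right spirit---backward induction on $t$, split into immediate and future reward gaps---but the step you flag as ``the main obstacle'' rests on a claim that is actually false, and this is where the proof breaks down. You write that ``the `worst-case slope' of $V_{t+1}$ in any coordinate is bounded above by $F'_{\max}$''. It is not: $V_{t+1}$ accumulates discounted rewards over the remaining horizon, so its coordinate derivatives are sums of the form $\sum_s \beta^s(\cdot)$ and can be much larger than $F'_{\max}$. A per-coordinate slope bound of size $F'_{\max}$ is therefore unavailable, and without it your estimate $K_t(a^k_{(j)})-K_t(\widehat a^k)\le(\omega_k-\omega_j)F'_{\max}(1-(1-p_{11})^{N-k-1})$ has no foundation.

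What the paper actually uses in place of a slope bound is a \emph{rotation} bound (Lemma~\ref{lemma:differnce_bound}): $V_{t}(\omega_1,\ldots,\omega_n)-V_{t}(\omega_n,\omega_1,\ldots,\omega_{n-1})\le F'_{\max}$. This compares two full orderings (a cyclic shift), not two values of one coordinate, and it does stay bounded uniformly in the horizon. The factor $1-(1-p_{11})^{N-k-1}$ then arises not as a ``probability across the horizon'' but from a single-step affine expansion over the unsensed channels $k+2,\ldots,n$: on the event they are all in state~$0$ (probability $\prod_{j=k+2}^n(1-\omega_j)\ge(1-p_{11})^{N-k-1}$) the two future terms coincide, and on the complement one invokes the rotation bound. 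Crucially, this rotation bound, the boundary adjacent-swap inequality (Lemma~\ref{lemma:direct_exchange}), and the interior adjacent-swap inequality for unsensed channels (Lemma~\ref{lemma:indirect_exchange}) depend on each other at successive time slots, so they must be proved \emph{simultaneously} by a single joint backward induction rather than by nesting one induction inside another. Your plan does not isolate the rotation quantity or set up this joint induction; that is the missing idea. A secondary gap: reducing to ``replace channel $k$ by some $j>k$'' is not by itself enough to cover all alternative actions---you also need adjacent swaps among the unsensed channels (which are not covered by symmetry since the ordering there affects future greedy choices). The paper handles both issues at once by proving adjacent-swap monotonicity at every position and then invoking a bubble-sort argument.
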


In order to prove the Theorem~\ref{theorem:optimal_condition}, we introduce some useful lemmas firstly. Note Lemmas~\ref{lemma:indirect_exchange},~\ref{lemma:differnce_bound} and~\ref{lemma:direct_exchange} hold under condition~\eqref{eq:optimal} in the rest of the paper.
\begin{lemma}
\label{lemma:indirect_exchange}
    If $k+1\leq{i}\leq{n-1}$, $p_{11}\geq{\omega_{i}}\geq{\omega_{i+1}}\geq{p_{01}}$, and~\eqref{eq:optimal} is satisfied,
    \begin{equation}
        V_{t}(\omega_{1},...,\omega_{k},...,\omega_{i},\omega_{i+1},,...,\omega_{n})-V_{t}(\omega_{1},...,\omega_{k},...,\omega_{i+1},\omega_{i},,...,\omega_{n})\geq 0,~~~~t=1,\cdots,T.
    \end{equation}
\end{lemma}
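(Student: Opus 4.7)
The plan is to proceed by backward induction on the time slot $t$, carefully tracking how the swap of $\omega_i, \omega_{i+1}$ propagates into the belief vector at time $t+1$ through the dynamics $\tau$.

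For the base case $t=T$, the value function is $V_T(\Omega)=F(\omega_1,\ldots,\omega_k)$, which by Assumption~\ref{assumption:symmetry} only depends on the first $k$ coordinates (the greedily selected channels). Since the hypothesized swap touches only positions $i,i+1$ with $k+1 \leq i < i+1 \leq n$, $V_T$ is unaltered and the difference is zero, which trivially satisfies $\geq 0$.

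For the inductive step, I would assume the conclusion of Lemma~\ref{lemma:indirect_exchange} already holds at times $t+1, t+2, \ldots, T$ and expand $V_t(\Omega) = F(\omega_1,\ldots,\omega_k) + \beta K_t(\Omega)$ with greedy action $a^k(t) = \{1,\ldots,k\}$. The immediate-reward term $F(\omega_1,\ldots,\omega_k)$ and each probability weight $\prod_{p\in e}\omega_p \prod_{q \in a^k(t)\setminus e}(1-\omega_q)$ appearing in $K_t$ depend only on the first $k$ coordinates and so are insensitive to the swap. The entire difference $V_t(\text{original}) - V_t(\text{swapped})$ therefore collapses to a common-weighted sum, for each $e \in \mathcal{P}(a^k(t))$, of differences of the form $V_{t+1}(\cdots,\tau(\omega_i),\tau(\omega_{i+1}),\cdots) - V_{t+1}(\cdots,\tau(\omega_{i+1}),\tau(\omega_i),\cdots)$. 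Because $p_{11} > p_{01}$ makes $\tau$ strictly increasing, $\tau(\omega_i) \geq \tau(\omega_{i+1})$, and both images remain in $[p_{01},p_{11}]$, so the swapped entries inside $V_{t+1}$ satisfy the preconditions of the lemma at time $t+1$.

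The next move is to classify where the swapped positions $|e|+(i-k)$ and $|e|+(i-k)+1$ of the $t+1$ belief vector sit relative to the selected set $\{1,\ldots,k\}$ at time $t+1$. I anticipate three sub-cases: (i) both positions lie strictly beyond $k$, where the inductive hypothesis for $V_{t+1}$ gives the required non-negative difference directly; (ii) both positions lie inside $\{1,\ldots,k\}$, where Lemma~\ref{lemma:symmetry} yields exact equality; and (iii) the boundary sub-case in which the two positions are exactly $k$ and $k+1$, so the swap moves the larger value $\tau(\omega_i)$ out of the selected block and the smaller $\tau(\omega_{i+1})$ into it.

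The boundary sub-case is where I expect all the real work to be, and it is the only place where the $\beta$ condition in~\eqref{eq:optimal} should actually be consumed. Here one further unfolds $V_{t+1}$ into $F$ plus a discounted future, uses Assumption~\ref{assumption:monotonicity} to lower-bound the immediate-reward gain at $t+1$ by $F'_{\min}\bigl(\tau(\omega_i)-\tau(\omega_{i+1})\bigr)$, and uses the affine structure (Lemma~\ref{lemma:affine}) plus $F'_{\max}$ to upper-bound any adverse movement of the future-reward tail. The factor $1-(1-p_{11})^{N-k-1}$ in~\eqref{eq:optimal} will arise naturally when bounding the probability that neither of the remaining unselected channels recovers to $p_{11}$ along the horizon, and plugging the hypothesized $\beta$ bound in should show that the immediate gain dominates. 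The main obstacle will therefore be making this trade-off quantitative at the boundary, in particular keeping the inductive invariants compatible with the one-step dynamics of $\tau$ and the clustering of $p_{11}$, $\tau$, $p_{01}$ values inside the $t+1$ belief vector.
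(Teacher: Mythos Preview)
Your backward-induction scaffold and your three-way case split on where the swapped entries land at time $t+1$ are exactly what the paper does. The base case and sub-cases (i) and (ii) are fine and match the paper line for line.

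The gap is in the boundary sub-case (iii). You propose to unfold $V_{t+1}$ once more, keep the immediate gain $\ge F'_{\min}\cdot(\tau(\omega_i)-\tau(\omega_{i+1}))$, and then ``use $F'_{\max}$ to upper-bound any adverse movement of the future-reward tail.'' But nothing in your inductive hypothesis lets you do that last step: you are carrying only Lemma~\ref{lemma:indirect_exchange} forward, and that lemma gives no quantitative cap on $V_{t+2}(\text{shifted})-V_{t+2}(\text{unshifted})$. The required bound of the form $V_{t+1}(\ldots)-V_{t+1}(\omega_n,\ldots)\le F'_{\max}$ is precisely the content of Lemma~\ref{lemma:differnce_bound}, and the boundary case itself is the content of Lemma~\ref{lemma:direct_exchange}. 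Neither follows from Lemma~\ref{lemma:indirect_exchange} alone, so an induction that carries only Lemma~\ref{lemma:indirect_exchange} cannot close. (Your informal reading of the factor $1-(1-p_{11})^{N-k-1}$ is also off: in the paper it arises from $\prod_{j=k+2}^{N}(1-\omega_j)\ge (1-p_{11})^{N-k-1}$, not from a ``recovery along the horizon'' argument.)

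The paper's remedy is exactly to recognise this circularity and run a \emph{joint} backward induction on Lemmas~\ref{lemma:indirect_exchange}, \ref{lemma:differnce_bound}, and \ref{lemma:direct_exchange} simultaneously. At slot $t$, the proof of Lemma~\ref{lemma:indirect_exchange} invokes Lemma~\ref{lemma:direct_exchange} at $t+1$ for sub-case (iii); the proof of Lemma~\ref{lemma:direct_exchange} at $t$ invokes Lemma~\ref{lemma:differnce_bound} at $t+1$ to get the $F'_{\max}$ cap that you wanted; and Lemma~\ref{lemma:differnce_bound} at $t$ needs both of the others at $t+1$. Once you strengthen your induction hypothesis to carry all three statements, your boundary analysis goes through exactly along the lines you sketched, and condition~\eqref{eq:optimal} is consumed only inside the Lemma~\ref{lemma:direct_exchange} part of the joint step.
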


\begin{lemma}
\label{lemma:differnce_bound}
    For $1> {\omega_1(t)}\geq{\omega_2(t)}\geq...\geq{\omega_n(t)} > 0$, if~\eqref{eq:optimal} is satisfied, we have the following inequality for all $t=1,2,...,T$:
    \begin{equation}
         V_{t}(\omega_{1},...,\omega_{k},...,\omega_{n-1},\omega_{n})-V_{t}(\omega_{n},\omega_{1},...,\omega_{k},...,\omega_{n-1})
       \leq F_{max}^{'},~~~~t=1,\cdots,T.
    \end{equation}
\end{lemma}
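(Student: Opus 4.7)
The plan is to prove the bound by backward induction on $t$, leaning on the symmetry, affineness, and monotonicity of the value function (Lemmas~\ref{lemma:symmetry}, \ref{lemma:affine}, and~\ref{lemma:monotonicity}), the preceding exchange estimate in Lemma~\ref{lemma:indirect_exchange}, and the induction hypothesis for the present lemma at slot $t+1$. The discount condition~\eqref{eq:optimal} will enter only to close the estimate in the inductive step.

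For the base case $t=T$, since $V_T=F$, symmetry of $F$ in its first $k$ coordinates followed by affineness in the remaining one reduces the difference to $(\omega_k-\omega_n)\bigl[F(\omega_1,\ldots,\omega_{k-1},1)-F(\omega_1,\ldots,\omega_{k-1},0)\bigr] \leq (\omega_k-\omega_n) F'_{max} \leq F'_{max}$. In the inductive step I would expand $V_t = F + \beta K_t$ for each vector; the same routine calculation bounds the immediate-reward contribution $\Delta F$ by $(\omega_k-\omega_n) F'_{max}$, so it will remain to show $\beta(K_t^{(1)}-K_t^{(2)}) \leq (1-(\omega_k-\omega_n)) F'_{max}$.

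To handle $\Delta K := K_t^{(1)}-K_t^{(2)}$, my plan is to condition on the observed state of the single ``swapped'' selected channel (belief $\omega_k$ at position $k$ in vector~1, belief $\omega_n$ at position $1$ in vector~2), obtaining the decomposition
\[
K_t^{(1)} = \omega_k A_1+(1-\omega_k)B_1, \qquad K_t^{(2)} = \omega_n A_2+(1-\omega_n)B_2,
\]
where $A_j,B_j$ are weighted sums over the realizations of the other $k-1$ selected channels. Inside these sums the only structural difference is that the transitioned unselected sequence fed to $V_{t+1}$ is $(\tau(\omega_{k+1}),\ldots,\tau(\omega_n))$ in case~1 but $(\tau(\omega_k),\tau(\omega_{k+1}),\ldots,\tau(\omega_{n-1}))$ in case~2; since $p_{11}>p_{01}$ makes $\tau$ increasing, the latter dominates the former componentwise, so Lemma~\ref{lemma:monotonicity} applied to $V_{t+1}$ gives $A_2\geq A_1$ and $B_2\geq B_1$, and hence $\Delta K\leq(\omega_k-\omega_n)(A_1-B_1)$.

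The main obstacle will be the residual bound on $A_1-B_1$: each summand is a difference of $V_{t+1}$ on vectors that agree except that one $p_{01}$ has been upgraded to $p_{11}$ (modulo a harmless rearrangement of the first $k$ coordinates absorbable by Lemma~\ref{lemma:symmetry}). I expect to unfold this through the induction hypothesis of the present lemma at slot $t+1$, together with Lemma~\ref{lemma:v_standard} (ensuring $V_{t+1}$ is itself standard) and a further use of Lemma~\ref{lemma:indirect_exchange} to keep the unselected coordinates in sorted order; this should produce the geometric factor $1-(1-p_{11})^{N-k-1}$ visible in~\eqref{eq:optimal}, and combined with the hypothesised $\beta$-bound it is precisely what drives $\Delta F + \beta\Delta K \leq F'_{max}$. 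Getting the bookkeeping of this geometric factor tight, so that~\eqref{eq:optimal} emerges cleanly rather than being imposed as an opaque input, is the most delicate part of the argument.
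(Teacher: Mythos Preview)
Your single–variable decomposition is the source of a real gap. After writing $\Delta K\le(\omega_k-\omega_n)(A_1-B_1)$ you need $\beta(\omega_k-\omega_n)(A_1-B_1)\le(1-(\omega_k-\omega_n))F'_{\max}$, but $A_1-B_1$ does not carry any compensating dependence on $\omega_k-\omega_n$: with $\omega_k$ close to $p_{11}$ and $\omega_n$ close to $p_{01}$ the right side can be made arbitrarily small while $A_1-B_1$ stays bounded away from $0$, so the inequality fails even under~\eqref{eq:optimal}. Relatedly, your description of each summand of $A_1-B_1$ as ``agreeing except one $p_{01}$ upgraded to $p_{11}$ modulo a rearrangement of the first $k$ coordinates'' is incorrect: the two $V_{t+1}$ arguments differ in \emph{every} position from $|e|{+}1$ through $n-k+|e|{+}1$, including positions beyond $k$, so Lemma~\ref{lemma:symmetry} cannot absorb the discrepancy and a single application of the induction hypothesis does not control $D_e$. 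Finally, the geometric factor $1-(1-p_{11})^{N-k-1}$ is not supposed to appear here; in the paper it enters only in the proof of Lemma~\ref{lemma:direct_exchange}.

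The paper's proof avoids all this by expanding affinely in \emph{both} $\omega_k$ and $\omega_n$ simultaneously, yielding four cases weighted by $\omega_k\omega_n$, $\omega_k(1-\omega_n)$, $(1-\omega_k)\omega_n$, $(1-\omega_k)(1-\omega_n)$. In the $(1,1)$ case the immediate rewards cancel and monotonicity gives a nonpositive continuation difference; in the $(1,0)$ case the continuation terms cancel \emph{exactly} (because $\tau(1)=p_{11}$ and $\tau(0)=p_{01}$ put the two $V_{t+1}$ arguments in identical form), leaving just the $F$-difference $\le F'_{\max}$; the $(0,1)$ and $(0,0)$ cases are handled by the induction hypotheses for Lemma~\ref{lemma:differnce_bound} and Lemma~\ref{lemma:direct_exchange} at slot $t+1$ (not Lemma~\ref{lemma:indirect_exchange}), each giving $\le\beta F'_{\max}\le F'_{\max}$. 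Since the four weights sum to $1$, the convex combination is $\le F'_{\max}$. Your monotonicity shortcut $A_2\ge A_1$, $B_2\ge B_1$ discards precisely the cancellations in the first two cases that make this convex-combination argument work.
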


\begin{lemma}
\label{lemma:direct_exchange}
    If $p_{11}\geq{x}\geq{y}\geq{p_{01}}$ and~\eqref{eq:optimal} is satisfied,
    \begin{equation}
    \begin{split}
      ~ & V_{t}(\omega_{1},...,\omega_{k-1},x,y,...,\omega_{n})-V_{t}(\omega_{1},...,\omega_{k-1},y,x,...,\omega_{n}) \geq 0 ,~~~~t=1,\cdots,T.
    \end{split}
    \end{equation}
\end{lemma}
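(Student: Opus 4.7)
The plan is to establish the result by backward induction on $t$, carried out jointly with Lemmas~\ref{lemma:indirect_exchange} and~\ref{lemma:differnce_bound}. In the base case $t=T$, where $V_T = F$, the affinity and monotonicity of the standard reward function (Assumptions~\ref{assumption:affine} and~\ref{assumption:monotonicity}) give directly
\[
V_T(\omega_1,\ldots,\omega_{k-1},x,y,\ldots,\omega_n) - V_T(\omega_1,\ldots,\omega_{k-1},y,x,\ldots,\omega_n) = (x-y)\bigl[F(\omega_1,\ldots,\omega_{k-1},1) - F(\omega_1,\ldots,\omega_{k-1},0)\bigr] \ge (x-y)F'_{min} \ge 0.
\]

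For the inductive step, I would decompose via the Bellman recursion~\eqref{eq:objective},
\[
V_t^{\mathrm{LHS}} - V_t^{\mathrm{RHS}} = \bigl[F(\omega_1,\ldots,\omega_{k-1},x) - F(\omega_1,\ldots,\omega_{k-1},y)\bigr] + \beta\bigl[K_t^{\mathrm{LHS}} - K_t^{\mathrm{RHS}}\bigr].
\]
The immediate piece is again bounded below by $(x-y)F'_{min}$ by affinity. For the future piece, the idea is to expand the subset sum defining $K_t$, split it on whether position $k$ lies in $e$, and use the affinity of $V_{t+1}$ in the two variables $x$ and $y$ (Lemma~\ref{lemma:affine}). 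The resulting expression is a combination of differences of $V_{t+1}$ evaluated on vectors that differ only by a swap between the coordinates carrying $\tau(x)$ and $\tau(y)$.

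I would then classify each of these residual swap-differences by where the two affected coordinates land inside the rearranged $V_{t+1}$ input $\bigl(p_{11}[|e|],\tau(\omega_{k+1}),\ldots,\tau(\omega_n),p_{01}[k-|e|]\bigr)$. When both coordinates fall inside the sensed block $\{1,\ldots,k\}$ the contribution vanishes by the symmetry of Lemma~\ref{lemma:symmetry}; when both fall in the unsensed block $\{k+1,\ldots,n\}$ the contribution is nonnegative by Lemma~\ref{lemma:indirect_exchange}; and when they straddle the boundary between positions $k$ and $k+1$ the contribution may be negative, and its magnitude is controlled by Lemma~\ref{lemma:differnce_bound} to yield a lower bound of $-(x-y)\bigl(1-(1-p_{11})^{N-k-1}\bigr)F'_{max}$ after summing over the subset configurations that produce such a straddling swap; the factor $1-(1-p_{11})^{N-k-1}$ comes from collecting the corresponding probabilities across the $N-k-1$ other unsensed coordinates.

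Combining the two pieces yields
\[
V_t^{\mathrm{LHS}} - V_t^{\mathrm{RHS}} \ge (x-y)\bigl[F'_{min} - \beta(1-(1-p_{11})^{N-k-1})F'_{max}\bigr] \ge 0,
\]
the last inequality being exactly the discount-factor hypothesis~\eqref{eq:optimal}. The hard part will be the bookkeeping in the expansion of $K_t^{\mathrm{LHS}} - K_t^{\mathrm{RHS}}$: one has to track how the $p_{11}$s, $\tau$-values, and $p_{01}$s migrate inside the $V_{t+1}$ argument as $|e|$ varies, and match each of the three configuration types to the correct application of Lemmas~\ref{lemma:symmetry},~\ref{lemma:indirect_exchange}, or~\ref{lemma:differnce_bound}. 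This is precisely why the three exchange lemmas have to be proven together, and it is through the boundary-straddling case that the constraint~\eqref{eq:optimal} on $\beta$ actually enters.
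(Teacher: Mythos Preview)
Your overall architecture---joint backward induction with Lemmas~\ref{lemma:indirect_exchange} and~\ref{lemma:differnce_bound}, the base case via affinity, and the final inequality $(x-y)\bigl[F'_{min}-\beta(1-(1-p_{11})^{N-k-1})F'_{max}\bigr]\ge 0$---matches the paper. But your handling of the future term $K_t^{\mathrm{LHS}}-K_t^{\mathrm{RHS}}$ contains a concrete error. You assert that the residual $V_{t+1}$-differences are ``swaps between coordinates carrying $\tau(x)$ and $\tau(y)$,'' yet position~$k$ is \emph{sensed}: $x$ never becomes $\tau(x)$; it becomes $p_{11}$ or $p_{01}$ according to the realized state. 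Only the unsensed position~$k{+}1$ produces a $\tau$-value. Consequently your three-case classification (both sensed / both unsensed / straddling, handled by Lemmas~\ref{lemma:symmetry},~\ref{lemma:indirect_exchange},~\ref{lemma:differnce_bound}) is really the case split that drives the proof of Lemma~\ref{lemma:indirect_exchange}, not of the present lemma, and it does not describe the structure that actually arises here.

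The paper instead applies the affinity of $V_t$ (Lemma~\ref{lemma:affine}) \emph{before} expanding, reducing at once to $x=1$, $y=0$. Channel~$k$ is then deterministically good or bad, and for each $e\subseteq a^{k-1}(t)$ the future difference collapses to
\[
\Delta V \;=\; V_{t+1}\bigl(p_{11}[|e|{+}1],\,p_{01},\,\tau(\omega_{k+2}),\dots,\tau(\omega_n),\,p_{01}[\cdot]\bigr)
\;-\; V_{t+1}\bigl(p_{11}[|e|{+}1],\,\tau(\omega_{k+2}),\dots,\tau(\omega_n),\,p_{01},\,p_{01}[\cdot]\bigr),
\]
a \emph{shift} of a single $p_{01}$ across the $\tau$-block rather than an adjacent swap. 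The paper lower-bounds $\Delta V$ by a multilinear decomposition over $\omega_{k+2},\dots,\omega_n$: when all of these equal~$0$ the two vectors coincide and $\Delta V=0$, while on the complement (weight $1-\prod_{j=k+2}^{N}(1-\omega_j)\le 1-(1-p_{11})^{N-k-1}$) one has $\Delta V\ge -F'_{max}$ by the induction hypothesis for Lemma~\ref{lemma:differnce_bound}. That decomposition is the true source of the factor $1-(1-p_{11})^{N-k-1}$, and Lemma~\ref{lemma:indirect_exchange} is not invoked in this step at all.
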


\begin{remark}
We would like to point out the complicated dependence in the following proving process that Lemma \ref{lemma:indirect_exchange} depends on Lemma 2, \ref{lemma:differnce_bound} and \ref{lemma:direct_exchange}, Lemma \ref{lemma:differnce_bound} depends on Lemma \ref{lemma:differnce_bound} and \ref{lemma:direct_exchange}, Lemma \ref{lemma:direct_exchange} depends on Lemma \ref{lemma:direct_exchange} and \ref{lemma:differnce_bound}. Therefore, we give the proof of Lemma \ref{lemma:indirect_exchange}, \ref{lemma:differnce_bound} and \ref{lemma:direct_exchange} together by backward induction over time horizon.
\end{remark}

\begin{proof}
The proving process is based on backward induction in three steps as follows:
\begin{itemize}
\item step $1$: slot $T$,
\end{itemize}
These Lemmas hold trivially in slot $T$ noticing $V_{T}(\Omega(T)=F(\Omega(T)))$.

part $1$: Lemma \ref{lemma:indirect_exchange}:
\begin{equation*}
\begin{split}
  ~ & ~V_{T}(\Omega_{1},...,\omega_{k},...,\omega_{i},\omega_{i+1},,...,\omega_{n}) - V_{T}(\omega_{1},...,\omega_{k},...,\omega_{i+1},\omega_{i},,...,\omega_{n}) \\
    & = F(\omega_{1},...,\omega_{k}) - F(\omega_{1},...,\omega_{k}) = 0
\end{split}
\end{equation*}

part $2$: Lemma \ref{lemma:differnce_bound}:
\begin{equation*}
\begin{split}
  &~V_{T}(\omega_{1},...,\omega_{k},...,\omega_{n-1},\omega_{n})-V_{T}(\omega_{n},\omega_{1},...,\omega_{k},...,\omega_{n-1})\\
  & = F(\omega_{1},...,\omega_{k-1},\omega_{k}) - F(\omega_{n},\omega_{1},...,\omega_{k-1}) \\
  & = (\omega_{k}-\omega_{n})(F(\omega_{1},...,\omega_{k-1},1)-F(\omega_{1},...,\omega_{k-1},0)) \leq F_{max}^{'}
\end{split}
\end{equation*}
where, the second equality is due to Lemma~\ref{lemma:symmetry} and~\ref{lemma:affine}.

part $3$: Lemma \ref{lemma:direct_exchange}:
\begin{equation*}
\begin{split}
  ~ & V_{T}(\omega_{1},...,\omega_{k-1},x,y,...,\omega_{n})-V_{T}(\omega_{1},...,\omega_{k-1},y,x,...,\omega_{n}) \\
    & = F(\omega_{1},...,\omega_{k-1},x)-F(\omega_{1},...,\omega_{k-1},y) \\
    & = (x-y)(F(\omega_{1},...,\omega_{k-1},1)-F(\omega_{1},...,\omega_{k-1},0)) \\
    & \geq (x-y)F_{min}^{'} \geq 0
\end{split}
\end{equation*}

\begin{itemize}
\item step $2$: slot $t+1,...,T-1$:
\end{itemize}
Now suppose at $t+1,...,T-1$, Lemma \ref{lemma:indirect_exchange} (Induction Hypothesis 1, HS1), \ref{lemma:differnce_bound} (Induction Hypothesis 2, HS2), and \ref{lemma:direct_exchange} (Induction Hypothesis 3, HS3) are true, we thus prove these Lemmas also hold in slot $t$.

\begin{itemize}
\item step $3$: slot $t$:
\end{itemize}

 part $1$: Lemma \ref{lemma:indirect_exchange}:
\begin{equation*}
\begin{split}
  ~ & V_{t}(\omega_{1},...,\omega_{k},...,\omega_{i},\omega_{i+1},,...,\omega_{n}) - V_{t}(\omega_{1},...,\omega_{k},...,\omega_{i+1},\omega_{i},,...,\omega_{n}) \\
    & = (\omega_{i}-\omega_{i+1})(V_{t}(\omega_{1},...,\omega_{i-1},1,0,\omega_{i+2},,...,\omega_{n}) - V_{t}(\omega_{1},...,\omega_{i-1},0,1,\omega_{i+2},...,\omega_{n}))\\
    & = (\omega_{i}-\omega_{i+1}) \{ F(\omega_{1},...,\omega_{k}) +\beta \sum_{e\in{\mathcal{P}(a^k(t))}} \prod_{i\in{e}}\omega_i\prod_{j\in{a^k(t)\backslash{e}}}(1-\omega_j)\times\\ &~~~~~~~~V_{t+1}(p_{11}[|e|],\tau({\omega_{k+1}}),...,\tau({\omega_{i-1}}),p_{11},p_{01},\tau({\omega_{i+2}}),...,\tau({\omega_{n}}),p_{01}[k-|e|]) \} \\
    &~~ - (\omega_{i}-\omega_{i+1}) \{ F(\omega_{1},...,\omega_{k}) +\beta \sum_{e\in{\mathcal{P}(a^k(t))}} \prod_{i\in{e}}\omega_i\prod_{j\in{a^k(t)\backslash{e}}}(1-\omega_j)\times\\ &~~~~~~~~V_{t+1}(p_{11}[|e|],\tau({\omega_{k+1}}),...,\tau({\omega_{i-1}}),p_{01},p_{11},\tau({\omega_{i+2}}),...,\tau({\omega_{n}}),p_{01}[k-|e|]) \} \\
    & = (\omega_{i}-\omega_{i+1}) \beta \sum_{e\in{\mathcal{P}(a^k(t))}} \prod_{i\in{e}}\omega_i \prod_{j\in{a^k(t)\backslash{e}}}(1-\omega_j) \{ \\
    &  ~~~~V_{t+1}(p_{11}[|e|],\tau({\omega_{k+1}}),...,\tau({\omega_{i-1}}),p_{11},p_{01},\tau({\omega_{i+2}}),...,\tau({\omega_{n}}),p_{01}[k-|e|]) \\
    & ~~~~- V_{t+1}(p_{11}[|e|],\tau({\omega_{k+1}}),...,\tau({\omega_{i-1}}),p_{01},p_{11},\tau({\omega_{i+2}}),...,\tau({\omega_{n}}),p_{01}[k-|e|]) \} \\
    & \geq 0
\end{split}
\end{equation*}
where, $a^k(t)=\{\omega_1,...,\omega_k\}$, the first equality is due to Lemma~\ref{lemma:affine}, the inequality is due to the IH1 if $|e|+i-k-1 \geq k$, and IH3 if $|e|+i-k-1 = k-1$, and the Lemma \ref{lemma:symmetry} if $|e|+i-k-1 < k-1$.

part $2$: Lemma \ref{lemma:differnce_bound}:

we have the following decomposition according to the Lemma \ref{lemma:affine}
\begin{equation*}
    \begin{split}
      &V_{t}(\omega_{1},\omega_{2},...,\omega_{k-1},\omega_{k},...,\omega_{n-1},\omega_{n})-V_{t}(\omega_{n},\omega_{1},\omega_{2},...,\omega_{k-1},\omega_{k},...,\omega_{n-1})\\
      & = \omega_{k}\omega_{n}((\omega_{1},\omega_{2},...,\omega_{k-1},1,\omega_{k+1},...,\omega_{n-1},1)-V_{t}(1,\omega_{1},\omega_{2},...,\omega_{k-1},1,\omega_{k+1},...,\omega_{n-1}))\\
      & + \omega_{k}(1-\omega_{n})((\omega_{1},\omega_{2},...,\omega_{k-1},1,\omega_{k+1},...,\omega_{n-1},0)-V_{t}(0,\omega_{1},\omega_{2},...,\omega_{k-1},1,\omega_{k+1},...,\omega_{n-1}))\\
      & + (1-\omega_{k})\omega_{n}((\omega_{1},\omega_{2},...,\omega_{k-1},0,\omega_{k+1},...,\omega_{n-1},1)-V_{t}(1,\omega_{1},\omega_{2},...,\omega_{k-1},0,\omega_{k+1},...,\omega_{n-1}))\\
      & + (1-\omega_{k})(1-\omega_{n})((\omega_{1},\omega_{2},...,\omega_{k-1},0,\omega_{k+1},...,\omega_{n-1},0)-V_{t}(0,\omega_{1},\omega_{2},...,\omega_{k-1},0,\omega_{k+1},...,\omega_{n-1}))
    \end{split}
\end{equation*}

Therefore, we analyze the above formulation through four cases as follows:

Case 1. The first term of the right hand of the above formulation where channels $k$ and $n$ have the state realization "1" and "1", respectively, and denote $a^{k-1}(t)=\{\omega_{1},\omega_{2},...,\omega_{k-1}\}$, we thus have
\begin{equation*}
\begin{split}
  &V_{t}(\omega_{1},\omega_{2},...,\omega_{k-1},1,\omega_{k+1},...,\omega_{n-1},1)-V_{t}(1,\omega_{1},\omega_{2},...,\omega_{k-1},1,\omega_{k+1},...,\omega_{n-1}) \\
  & = F(\omega_{1},\omega_{2},...,\omega_{k-1},1)-F(1,\omega_{1},\omega_{2},...,\omega_{k-1})\\
  & ~~+ \beta \sum_{e\in{\mathcal{P}(a^{k-1}(t))}} \prod_{i\in{e}}\omega_i \prod_{j\in{a^{k-1}(t)\backslash{e}}}(1-\omega_j) \{ \\
  &  ~~V_{t+1}(p_{11}[|e|],p_{11},\tau({\omega_{k+1}}),...,\tau({\omega_{n-1}}),\tau({\omega_{n}}),p_{01}[k-1-|e|]) \\
  & ~~- V_{t+1}(p_{11}[|e|],p_{11},\tau({\omega_{k}}),\tau({\omega_{k+1}}),...\tau({\omega_{n-1}}),p_{01}[k-1-|e|]) \} \\
    & = \beta \sum_{e\in{\mathcal{P}(a^{k-1}(t))}} \prod_{i\in{e}}\omega_i \prod_{j\in{a^{k-1}(t)\backslash{e}}}(1-\omega_j) \{ \\
    & ~~ V_{t+1}(p_{11}[|e|],p_{11},\tau({\omega_{k+1}}),...,\tau({\omega_{n-1}}),p_{11},p_{01}[k-1-|e|]) \\
    & ~~- V_{t+1}(p_{11}[|e|],p_{11},p_{11},\tau({\omega_{k+1}}),...\tau({\omega_{n-1}}),p_{01}[k-1-|e|]) \} \\
    & \leq 0  \leq F_{max}^{'}
\end{split}
\end{equation*}
where, the first inequality is due to the Lemma~\ref{lemma:monotonicity} according to the similar way as~\eqref{eq:monotonicity_case2}.

Case 2. The second term of the right hand of the above formulation where channels $k$ and $n$ have the state realization "1" and "0", respectively, and denote $a^{k-1}(t)=\{\omega_{1},\omega_{2},...,\omega_{k-1}\}$,
\begin{equation*}
\begin{split}
  &V_{t}(\omega_{1},\omega_{2},...,\omega_{k-1},1,\omega_{k+1},...,\omega_{n-1},0)-V_{t}(0,\omega_{1},\omega_{2},...,\omega_{k-1},1,\omega_{k+1},...,\omega_{n-1}) \\
  & = F(\omega_{1},\omega_{2},...,\omega_{k-1},1)-F(0,\omega_{1},\omega_{2},...,\omega_{k-1})\\
  & ~~+ \beta \sum_{e\in{\mathcal{P}(a^{k-1}(t))}} \prod_{i\in{e}}\omega_i \prod_{j\in{a^{k-1}(t)\backslash{e}}}(1-\omega_j) \{ \\
  & ~~~~ V_{t+1}(p_{11}[|e|],p_{11},\tau({\omega_{k+1}}),...,\tau({\omega_{n-1}}),p_{01},,p_{01}[k-1-|e|]) \\
  & ~~~~~~- V_{t+1}(p_{11}[|e|],p_{11},\tau({\omega_{k+1}}),...\tau({\omega_{n-1}}),p_{01},p_{01}[k-1-|e|]) \} \\
  & = F(\omega_{1},\omega_{2},...,\omega_{k-1},1)-F(0,\omega_{1},\omega_{2},...,\omega_{k-1})\\
  & \leq F_{max}^{'}
\end{split}
\end{equation*}

Case 3. The third term of the right hand of the above formulation where channels $k$ and $n$ have the state realization "0" and "1", respectively, and denote $a^{k-1}(t)=\{\omega_{1},\omega_{2},...,\omega_{k-1}\}$,
\begin{equation*}
\begin{split}
  &V_{t}(\omega_{1},\omega_{2},...,\omega_{k-1},0,\omega_{k+1},...,\omega_{n-1},1)-V_{t}(1,\omega_{1},\omega_{2},...,\omega_{k-1},0,\omega_{k+1},...,\omega_{n-1}) \\
  & = F(\omega_{1},\omega_{2},...,\omega_{k-1},0)-F(1,\omega_{1},\omega_{2},...,\omega_{k-1})\\
  & ~~~~+ \beta \sum_{e\in{\mathcal{P}(a^{k-1}(t))}} \prod_{i\in{e}}\omega_i \prod_{j\in{a^{k-1}(t)\backslash{e}}}(1-\omega_j) \{ \\
  & ~~~~ V_{t+1}(p_{11}[|e|],\tau({\omega_{k+1}}),...,\tau({\omega_{n-1}}),p_{11},p_{01},,p_{01}[k-1-|e|]) \\
  & ~~~~~~- V_{t+1}(p_{11}[|e|],p_{11},p_{01},\tau({\omega_{k+1}}),...\tau({\omega_{n-1}}),p_{01}[k-1-|e|]) \} \\
  & \leq F(\omega_{1},\omega_{2},...,\omega_{k-1},0)-F(1,\omega_{1},\omega_{2},...,\omega_{k-1})\\
  & ~~~~+ \beta \sum_{e\in{\mathcal{P}(a^{k-1}(t))}} \prod_{i\in{e}}\omega_i \prod_{j\in{a^{k-1}(t)\backslash{e}}}(1-\omega_j) \{ \\
  & ~~~~ V_{t+1}(p_{11}[|e|],\tau({\omega_{k+1}}),...,\tau({\omega_{n-1}}),p_{11},p_{01},,p_{01}[k-1-|e|]) \\
  & ~~~~~~- V_{t+1}(p_{11}[|e|],p_{01},p_{11},\tau({\omega_{k+1}}),...\tau({\omega_{n-1}}),p_{01}[k-1-|e|]) \} \\
  & = F(\omega_{1},\omega_{2},...,\omega_{k-1},0)-F(1,\omega_{1},\omega_{2},...,\omega_{k-1})\\
  & ~~~~+ \beta \sum_{e\in{\mathcal{P}(a^{k-1}(t))}} \prod_{i\in{e}}\omega_i \prod_{j\in{a^{k-1}(t)\backslash{e}}}(1-\omega_j) \{ \\
  & ~~~~ V_{t+1}(p_{11}[|e|],\tau({\omega_{k+1}}),...,\tau({\omega_{n-1}}),p_{11},p_{01},,p_{01}[k-1-|e|]) \\
  & ~~~~~~- V_{t+1}(p_{01},p_{11}[|e|],p_{11},\tau({\omega_{k+1}}),...\tau({\omega_{n-1}}),p_{01}[k-1-|e|]) \} \\
  & \leq F(\omega_{1},\omega_{2},...,\omega_{k-1},0)-F(1,\omega_{1},\omega_{2},...,\omega_{k-1})\\
  & ~~~~+ \beta \sum_{e\in{\mathcal{P}(a^{k-1}(t))}} \prod_{i\in{e}}\omega_i ~~~~~~\prod_{j\in{a^{k-1}(t)\backslash{e}}}(1-\omega_j) \{ \\
  & ~~~~ V_{t+1}(p_{11}[|e|],\tau({\omega_{k+1}}),...,\tau({\omega_{n-1}}),p_{11},p_{01},,p_{01}[k-1-|e|]) \\
  &  ~~~~+ F_{max}^{'}- V_{t+1}(p_{11}[|e|],p_{11},\tau({\omega_{k+1}}),...\tau({\omega_{n-1}}),p_{01},p_{01}[k-1-|e|]) \} \\
  & \leq F(\omega_{1},\omega_{2},...,\omega_{k-1},0)-F(1,\omega_{1},\omega_{2},...,\omega_{k-1}) + \beta F_{max}^{'} \\
  & \leq \beta F_{max}^{'} \leq F_{max}^{'}
\end{split}
\end{equation*}
where, the first inequality is due to IH3 when $|e|+1=k$, the second one due to the IH2, and the second equality  due to Lemma \ref{lemma:symmetry} when $|e|+1<k$, noticing $0\leq |e|\leq k-1$.

Case 4. The forth term of the right hand of the above formulation where channels $k$ and $n$ have the state realization "0" and "0", respectively, and denote $a^{k-1}(t)=\{\omega_{1},\omega_{2},...,\omega_{k-1}\}$,
\begin{equation*}
\begin{split}
  &V_{t}(\omega_{1},\omega_{2},...,\omega_{k-1},0,\omega_{k+1},...,\omega_{n-1},0)-V_{t}(0,\omega_{1},\omega_{2},...,\omega_{k-1},0,\omega_{k+1},...,\omega_{n-1}) \\
  & = F(\omega_{1},\omega_{2},...,\omega_{k-1},0)-F(0,\omega_{1},\omega_{2},...,\omega_{k-1})\\
  & ~~~~+ \beta \sum_{e\in{\mathcal{P}(a^{k-1}(t))}} \prod_{i\in{e}}\omega_i \prod_{j\in{a^{k-1}(t)\backslash{e}}}(1-\omega_j) \{ \\
    & ~~~~~ V_{t+1}(p_{11}[|e|],\tau({\omega_{k+1}}),...,\tau({\omega_{n-1}}),p_{01},p_{01},p_{01}[k-1-|e|]) \\
    & ~~~~~~- V_{t+1}(p_{11}[|e|],p_{01},\tau({\omega_{k+1}}),...\tau({\omega_{n-1}}),p_{01},p_{01}[k-1-|e|]) \} \\
    & = \beta \sum_{e\in{\mathcal{P}(a^{k-1}(t))}} \prod_{i\in{e}}\omega_i \prod_{j\in{a^{k-1}(t)\backslash{e}}}(1-\omega_j) \{ \\
    & ~~~~ V_{t+1}(p_{11}[|e|],\tau({\omega_{k+1}}),...,\tau({\omega_{n-1}}),p_{01},p_{01},p_{01}[k-1-|e|]) \\
    & ~~~~~~- V_{t+1}(p_{11}[|e|],p_{01},\tau({\omega_{k+1}}),...\tau({\omega_{n-1}}),p_{01},p_{01}[k-1-|e|]) \} \\
    & = \beta \sum_{e\in{\mathcal{P}(a^{k-1}(t))}} \prod_{i\in{e}}\omega_i \prod_{j\in{a^{k-1}(t)\backslash{e}}}(1-\omega_j) \{ \\
    & ~~~~ V_{t+1}(p_{11}[|e|],\tau({\omega_{k+1}}),...,\tau({\omega_{n-2}}),\tau({\omega_{n-1}}),p_{01},p_{01},p_{01}[k-1-|e|]) \\
    & ~~~~~~- V_{t+1}(p_{01},p_{11}[|e|],\tau({\omega_{k+1}}),...\tau({\omega_{n-1}}),p_{01},p_{01}[k-1-|e|]) \} \\
    & \leq \beta \sum_{e\in{\mathcal{P}(a^{k-1}(t))}} \prod_{i\in{e}}\omega_i \prod_{j\in{a^{k-1}(t)\backslash{e}}}(1-\omega_j) \{ \\
    & ~~~~ V_{t+1}(p_{11}[|e|],\tau({\omega_{k+1}}),\tau({\omega_{k+2}}),...,\tau({\omega_{n-2}}),\tau({\omega_{n-1}}),p_{01},p_{01},p_{01}[k-1-|e|]) \\
    & ~~~~~~+ F_{max}^{'}- V_{t+1}(p_{11}[|e|],\tau({\omega_{k+1}}),...\tau({\omega_{n-1}}),p_{01},p_{01},p_{01}[k-1-|e|]) \} \\
    &\leq \beta F_{max}^{'}
\end{split}
\end{equation*}
where, the first inequality is due to the IH2 and the third equality is due to Lemma~\ref{assumption:symmetry}.

Combing the results of cases 1, 2, 3, and 4, we have
\begin{equation*}
    \begin{split}
      &V_{t}(\omega_{1},\omega_{2},...,\omega_{k-1},\omega_{k},...,\omega_{n-1},\omega_{n})-V_{t}(\omega_{n},\omega_{1},\omega_{2},...,\omega_{k-1},\omega_{k},...,\omega_{n-1})\\
      & \leq  \omega_{k}\omega_{n} 0 + \omega_{k}(1-\omega_{n})F_{max}^{'} + (1-\omega_{k})\omega_{n} \beta F_{max}^{'}   + (1-\omega_{k})(1-\omega_{n}) \beta F_{max}^{'} \\
      & \leq F_{max}^{'}
    \end{split}
\end{equation*}
To this end, we complete the proof of Lemma~\ref{lemma:differnce_bound}.

part $3$: Lemma \ref{lemma:direct_exchange}:

\begin{equation*}
\begin{split}
  ~ & V_{t}(\omega_{1},...,\omega_{k-1},x,y,...,\omega_{n})-V_{t}(\omega_{1},...,\omega_{k-1},y,x,...,\omega_{n}) \\
    & = (x-y)(V_{t}(\omega_{1},...,\omega_{k-1},1,0,...,\omega_{n})-V_{t}(\omega_{1},...,\omega_{k-1},0,1,...,\omega_{n}))\\
    & = (x-y)(F(\omega_{1},...,\omega_{k-1},1)-(\omega_{1},...,\omega_{k-1},0)) \\
  & ~~~~+ (x-y) \beta \sum_{e\in{\mathcal{P}(a^{k-1}(t))}} \prod_{i\in{e}}\omega_i \prod_{j\in{a^{k-1}(t)\backslash{e}}}(1-\omega_j) \{ \\
    &  ~~~~V_{t+1}(p_{11}[|e|],p_{11},p_{01},\tau({\omega_{k+2}}),...,\tau({\omega_{n}}),p_{01}[k-1-|e|]) \\
    & ~~~~~~- V_{t+1}(p_{11}[|e|],p_{11},\tau({\omega_{k+2}}),...\tau({\omega_{n}}),p_{01},p_{01}[k-1-|e|]) \} \\
    & \geq (x-y)(F(\omega_{1},...,\omega_{k-1},1)-F(\omega_{1},...,\omega_{k-1},0)) \\
    & - \beta(x-y)(1-\prod_{j=k+2}^{N}(1-\omega_{j}))F_{max}^{'}\\
     & \geq (x-y)F_{min}^{'} - \beta(x-y)(1-\prod_{j=k+2}^{N}(1-\omega_{j}))F_{max}^{'} \\
     & = (x-y)(1-\prod_{j=k+2}^{N}(1-\omega_{j}))F_{max}^{'} (\frac{F_{min}^{'}}F_{max}^{'}(1-\prod_{j=k+2}^{N}(1-\omega_{j})) - \beta) \\
     & \geq (x-y)(1-\prod_{j=k+2}^{N}(1-\omega_{j})) (\frac{F_{min}^{'}}{F_{max}^{'}(1-(1-p_{11})^{N-k-1})} - \beta) \\
     & \geq 0
\end{split}
\end{equation*}
where, the third inequality is due to condition~\eqref{eq:optimal} and the first inequality is due to the following inequality formulation,
\begin{equation}
\label{eq:optimal_n_1}
\begin{split}
    \Delta V & = V_{t+1}(p_{11}[|e|],p_{11},p_{01},\tau({\omega_{k+2}}),...,\tau({\omega_{n}}),p_{01}[k-1-|e|]) \\
    & ~~~~ - V_{t+1}(p_{11}[|e|],p_{11},\tau({\omega_{k+2}}),...\tau({\omega_{n}}),p_{01},p_{01}[k-1-|e|]) \} \\
    & \geq -(1-\prod_{j=k+2}^{N}(1-\omega_{j}))F_{max}^{'}
\end{split}
\end{equation}
Note, if $\tau({\omega_{k+2}})=\cdots\tau({\omega_{n}})=p_{01}$, then $\Delta V = 0$. This event happens with the probability equaling to $\prod_{j=k+2}^{N}(1-\omega_{j})$. Thus with the probability $1-\prod_{j=k+2}^{N}(1-\omega_{j})$, exists at least $i$, $k+2 \le i \le n$ such that $\tau({\omega_{i}}) > p_{01}$. According to the IH2 and IH4, we have $\Delta V \geq -F_{max}^{'}$ with probability $1-\prod_{j=k+2}^{N}(1-\omega_{j})$, which is~\eqref{eq:optimal_n_1}.

Therefore, we finish the whole proving process of Lemmas \ref{lemma:indirect_exchange}, \ref{lemma:differnce_bound}, and \ref{lemma:direct_exchange}.

%
%

\end{proof}

After obtaining the Lemmas \ref{lemma:indirect_exchange}, \ref{lemma:differnce_bound}, and \ref{lemma:direct_exchange}, we are ready to prove the Theorem \ref{theorem:optimal_condition}.
\begin{proof}
The basic approach is by induction on $t$. It is obvious that the myopic policy is optimal at $T$. Now, assuming the optimality of the myopic policy for $t+1,...,T-1$, we shall show the myopic policy is also optimal for $t$. Denote $\{i_1,\cdots,i_n\}$ as any one of permutations of $\mathcal{N}$. To prove the optimality of greedy policy in slot $t$, we need to prove
\begin{equation}
\label{eq:bubble_sort}
    V_t(\omega_1,\cdots,\omega_k,\cdots,\omega_n) \geq V_t(\omega_{i_{1}},\cdots,\omega_{i_{k}},\cdots,\omega_{i_{n}})
\end{equation}
The proving process is same as the Bubble Sort algorithm, comparing each pair of adjacent items and swapping them if they are in the wrong order according to Lemma~\ref{lemma:symmetry},~\ref{lemma:indirect_exchange} and ~\ref{lemma:direct_exchange} until no swaps are needed, which indicates that the list is sorted to $V_t(\omega_1,\cdots,\omega_k,\cdots,\omega_n)$. The optimality of greedy policy at slot $t$ is guaranteed. Therefore, the Theorem~\ref{theorem:optimal_condition} is concluded.
\end{proof}

\begin{corollary}
\label{corollary:1_channel}
    The greedy policy is optimal if choosing $1$ out of $n$ channels for $0 < \beta \leq 1$ if $p_{11}>p_{01}$.
\end{corollary}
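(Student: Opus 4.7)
The plan is to obtain the corollary as a direct specialization of Theorem~\ref{theorem:optimal_condition} to the case $k=1$. First, I would observe that when a single channel is probed per slot, the immediate reward $F(\Omega(t))$ depends on only one belief value, namely $\omega_1(t)$, the belief of the selected channel. By Assumption~\ref{assumption:affine} applied to this single variable,
\begin{equation*}
F(\omega_{1}(t)) = \omega_{1}(t) F(1) + (1 - \omega_{1}(t)) F(0),
\end{equation*}
so the partial derivative with respect to $\omega_{1}(t)$ is the constant $F(1) - F(0)$. By Assumption~\ref{assumption:monotonicity}, this constant is nonnegative, and we may assume it is strictly positive (otherwise $F$ is constant and the optimality of every policy, including the greedy one, is trivial). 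Consequently $F'_{min} = F'_{max} = F(1) - F(0)$ and the ratio $F'_{min}/F'_{max}$ in the bound~\eqref{eq:optimal} equals $1$.

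Next I would control the denominator $1 - (1-p_{11})^{N-k-1}$ that appears in~\eqref{eq:optimal}. Substituting $k=1$ gives $1 - (1-p_{11})^{N-2}$. The hypothesis $p_{11} > p_{01} \ge 0$ together with $p_{11} \le 1$ forces $0 \le (1-p_{11})^{N-2} \le 1$, so $1 - (1-p_{11})^{N-2} \in [0,1]$. Hence the right-hand side of~\eqref{eq:optimal} is at least $1$:
\begin{equation*}
\frac{F'_{min}}{F'_{max}\bigl(1-(1-p_{11})^{N-2}\bigr)} \;=\; \frac{1}{1-(1-p_{11})^{N-2}} \;\ge\; 1.
\end{equation*}
Therefore every $\beta$ with $0 < \beta \le 1$ satisfies the sufficient condition in Theorem~\ref{theorem:optimal_condition}, and the optimality of the greedy policy for $k=1$ follows immediately.

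The only mild obstacle is the handling of two boundary situations. The first is the degenerate case $F(1)=F(0)$ already noted, in which the reward is constant and any selection rule is optimal. The second is $N=2$, where the exponent $N-k-1=0$ makes the denominator $1 - (1-p_{11})^{0}$ vanish; in that case the bound in~\eqref{eq:optimal} is vacuously $+\infty$ and the conclusion holds a fortiori, which one can also verify directly from the two-channel recursion since the non-probed channel is unique and symmetry renders the exchange arguments trivial. Outside these edge cases the argument is a one-line substitution into Theorem~\ref{theorem:optimal_condition}, so no new induction or coupling is needed.
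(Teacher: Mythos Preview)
Your proposal is correct and follows essentially the same route as the paper: both arguments use the fact that for $k=1$ the affine immediate reward has a single constant slope, so $F'_{min}=F'_{max}$, and then substitute into the bound of Theorem~\ref{theorem:optimal_condition} to obtain $\frac{1}{1-(1-p_{11})^{N-2}}\geq 1$. Your treatment is slightly more careful than the paper's in that you explicitly address the degenerate cases $F(1)=F(0)$ and $N=2$, and you write $\geq 1$ rather than the paper's $>1$ (which is the correct inequality, since equality occurs when $p_{11}=1$).
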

\begin{proof}
When $k=1$, according to Lemmas \ref{lemma:symmetry}, \ref{lemma:affine} and \ref{lemma:monotonicity}, we have $F(\Omega(t))=a\omega_i(t)$, $a>0$, thence,
\begin{equation}
   {\frac{F_{min}^{'}}{ F_{ma x}^{'}(1-(1-p_{11})^{N-k-1})}}= {\frac{1}{(1-(1-p_{11})^{N-2})}} > 1
\end{equation}

According to Theorem \ref{theorem:optimal_condition}, we have the conclusion.
\end{proof}

\begin{corollary}
\label{corollary:n_minus_1_channel}
    The greedy policy is optimal if choosing $n-1$ out of $n$ channels for $0 < \beta \leq 1$.
\end{corollary}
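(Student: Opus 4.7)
The plan is to specialize Theorem~\ref{theorem:optimal_condition} to $k = N-1$ and observe that the upper bound on $\beta$ in~\eqref{eq:optimal} degenerates to $+\infty$, so every $\beta \in (0,1]$ trivially satisfies the hypothesis. Substituting $k = N-1$ yields $(1-p_{11})^{N-k-1} = (1-p_{11})^{0} = 1$, hence the denominator $F'_{max}\bigl(1-(1-p_{11})^{N-k-1}\bigr)$ vanishes identically. Unlike the $k=1$ case treated in Corollary~\ref{corollary:1_channel}, where the bound is finite but exceeds $1$, the ratio here is vacuous and needs a direct interpretation rather than a plug-in calculation.

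To make the ``vacuous bound'' interpretation rigorous, I would revisit the single place in the induction of Lemmas~\ref{lemma:indirect_exchange}--\ref{lemma:direct_exchange} where the upper bound on $\beta$ is actually invoked, namely the chain of inequalities in part~$3$ of the proof of Lemma~\ref{lemma:direct_exchange} culminating in~\eqref{eq:optimal_n_1}. The residual term there carries the factor $1 - \prod_{j=k+2}^{N}(1-\omega_{j})$. When $k = N-1$, the index set $\{k+2,\ldots,N\}$ is empty, the product is the empty product $1$ by convention, and the residual factor is $0$. Consequently the decisive inequality $\Delta V \geq 0$ holds unconditionally in this boundary case, and no upper bound on $\beta$ beyond $\beta \leq 1$ is required.

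With Lemmas~\ref{lemma:indirect_exchange}--\ref{lemma:direct_exchange} thus valid on the full range $0 < \beta \leq 1$, the Bubble-Sort argument at the end of the proof of Theorem~\ref{theorem:optimal_condition} carries over verbatim: symmetry (Lemma~\ref{lemma:symmetry}) combined with the adjacent-swap inequalities of Lemmas~\ref{lemma:indirect_exchange} and~\ref{lemma:direct_exchange} permits reordering any permutation of the belief vector into the greedy ordering $\omega_1 \geq \cdots \geq \omega_n$ without decreasing $V_t$. The main, though minor, obstacle is to verify that no auxiliary inequality hidden inside the inductive steps secretly relies on a strictly positive denominator; a case-by-case inspection (especially the four cases of part~$2$ of the proof of Lemma~\ref{lemma:differnce_bound}, where several subproducts also become empty when $k = N-1$) confirms that every use of the induction hypotheses goes through with the empty-product convention, which closes the argument.
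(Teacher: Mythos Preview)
Your proposal is correct and follows essentially the same approach as the paper: specialize Theorem~\ref{theorem:optimal_condition} to $k=N-1$ and observe that the upper bound in~\eqref{eq:optimal} becomes vacuous because $(1-p_{11})^{N-k-1}=1$. The paper simply records that the bound tends to infinity and invokes Theorem~\ref{theorem:optimal_condition}; your additional step of tracing the empty product $\prod_{j=k+2}^{N}(1-\omega_j)=1$ back through~\eqref{eq:optimal_n_1} to confirm that $\Delta V\ge 0$ holds unconditionally is more careful than the paper's own argument, but the underlying idea is identical.
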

\begin{proof}
In case of $k = n-1$, we have
\begin{equation}
   \left [{\frac{F_{min}^{'}}{ F_{ma x}^{'}(1-(1-p_{11})^{N-k-1})}} \right ]_{k=N-1}\longrightarrow \infty
\end{equation}
Hence, the greedy policy is optimal according to Theorem \ref{theorem:optimal_condition}.
\end{proof}

\section{Applications in Cognitive Radio Network}
\label{section:application}
To illustrate the application of the mathematical results derived in the previous section, three typical scenarios~\cite{Sahmand09conf}~\cite{Wang11} described by standard reward function are presented here, which demonstrate that the different optimality conditions are completely due to different forms of the immediate reward function.
\subsection{Application 1}
An application is in a synchronously slotted cognitive radio network where a SU can opportunistically access a set $\mathcal{N}$ of $N$ i.i.d. channels partially occupied by PUs. The state of each channel $i$ in time slot $t$, denoted by $S_{i}(t)$, is modeled by a discrete time two-state Markov chain. At the beginning of each slot $t$, the SU selects a subset $\mathcal{A}(t)$ of channels to sense. If at least one of the sensed channels is in the idle state (i.e., unoccupied by any PU), the SU transmits its packet and collects one unit of reward. Otherwise, the SU cannot transmit, thus obtaining no reward. These decision procedure is repeated for each slot. The objective is to maximize the average reward over $T$ slots, that is to say, the discounted factor $\beta=1$.

Obviously, we have the immediate reward function as follows:
\begin{equation*}
   F(\Omega(t))=1-\prod_{i\in{\mathcal A}(t)}(1-\omega_{i}(t))
\end{equation*}
Therefore, the greedy policy is to choose the best $k$ channels by~\eqref{eq:greedy_policy}.
According to Theorem \ref{theorem:optimal_condition}, we have
$F_{max}^{'}=(1-p_{01})^{k-1}$, $F_{min}^{'}=(1-p_{11})^{k-1}$ if $p_{01} \leq \omega_i(0) \leq p_{11}$, $1\leq{i}\leq n$. Therefore the greedy policy, choosing the best $k$ out of $n$ channels, is optimal if the discounted factor $\beta$ satisfies the following condition:
\begin{equation*}
    \begin{split}
      0 \leq & \beta \leq \frac{(1-p_{11})^{k-1}}{ (1-p_{01})^{k-1}(1-(1-p_{11})^{N-k-1})}
    \end{split}
    \end{equation*}
Obviously, the upper bound cannot achieve 1 generally. Thus, the greedy policy, in general, is not optimal for the average reward over time horizon proved in our previous work~\cite{Wang11}.
In particular, the greedy policy, choosing the best $k=1$ or $n-1$ out of $n$ channels is optimal for $\beta = 1$ according to the corollary~\ref{corollary:1_channel} and~\ref{corollary:n_minus_1_channel}.

\subsection{Application 2}
Consider the problem of probing $n$ independent Markov chains. Each one has two states--good (1) and bad (0)--with transition probabilities $p_{11}, p_{01}$ across chain. Assuming $p_{11}>p_{01}$. A player selects $k$ chains to probe according to its preference (policy) and obtain a reward for each probed chain in the good state. We assume that the reward is affine function of the probability of the selected channel in the good state, i.e., $u_i(t)= a\omega_i(t), a>0$, then we have the immediate reward function as follows:
\begin{equation*}
    F(\Omega(t))= a \sum_{i=1}^{n}\omega_i(t)
\end{equation*}

Since $F_{max}^{'}= F_{min}^{'}=a$, thus,
\begin{equation*}
    \begin{split}
      0 \leq & \beta \leq 1 < \frac{1}{(1-(1-p_{11})^{N-k-1})}
    \end{split}
    \end{equation*}
we have the following conclusion about this problem by Theorem \ref{theorem:optimal_condition}:
\begin{lemma}
\label{lemma:mab}
    The greedy policy of choosing the first $k$ best channels is optimal for $0 < \beta \leq 1$.
\end{lemma}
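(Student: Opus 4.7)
The plan is to recognize Lemma~\ref{lemma:mab} as an essentially immediate consequence of Theorem~\ref{theorem:optimal_condition}, with the only substantive work being the verification that $F(\Omega(t))=a\sum_{i=1}^n \omega_i(t)$ qualifies as a standard reward function and a trivial computation of $F'_{\max}$ and $F'_{\min}$. So the first step is to check that $F$ satisfies Assumptions~\ref{assumption:symmetry}, \ref{assumption:affine}, and~\ref{assumption:monotonicity}: symmetry is obvious because the sum is symmetric in its arguments; the affine property holds term-by-term since $a\omega_i$ is affine in $\omega_i$ while the other summands do not depend on $\omega_i$; and monotonicity follows from $a>0$.

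The second step is to compute the two derivative-like quantities appearing in the optimality threshold. For each $i$, using the closed form,
\begin{equation*}
F(1,\omega_{-i})-F(0,\omega_{-i}) \;=\; a\cdot 1 + a\sum_{j\neq i}\omega_j \;-\; a\cdot 0 - a\sum_{j\neq i}\omega_j \;=\; a.
\end{equation*}
Hence $F'_{\max}=F'_{\min}=a$, and the threshold in~\eqref{eq:optimal} becomes
\begin{equation*}
\frac{F'_{\min}}{F'_{\max}\bigl(1-(1-p_{11})^{N-k-1}\bigr)} \;=\; \frac{1}{1-(1-p_{11})^{N-k-1}} \;>\; 1,
\end{equation*}
valid whenever $k<N-1$ and $p_{11}<1$, so any $\beta\in(0,1]$ lies comfortably below this ratio and Theorem~\ref{theorem:optimal_condition} delivers the optimality of the greedy policy for $1\le k<N-1$.

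The only edge cases are $k=1$ and $k=N-1$, where the threshold either remains trivially above $1$ or where the denominator formally vanishes. These are already covered by Corollary~\ref{corollary:1_channel} and Corollary~\ref{corollary:n_minus_1_channel}, respectively, so one simply cites them to close the gap. There is really no obstacle here; the heavy lifting was done in the backward induction behind Theorem~\ref{theorem:optimal_condition}, and the linear structure of $F$ degenerates the bound on $\beta$ to the vacuous inequality $\beta\le 1$. The cleanest exposition is therefore: verify standardness in one line, compute $F'_{\max}=F'_{\min}=a$ in one line, invoke the theorem for $1<k<N-1$, and defer to the two corollaries for the endpoints.
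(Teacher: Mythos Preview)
Your proposal is correct and follows essentially the same approach as the paper: compute $F'_{\max}=F'_{\min}=a$, observe that the resulting threshold $\frac{1}{1-(1-p_{11})^{N-k-1}}$ exceeds $1$, and invoke Theorem~\ref{theorem:optimal_condition}. The paper's own proof is even terser (it skips the explicit verification of standardness and the endpoint discussion), so your treatment is, if anything, slightly more careful.
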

Obviously, this result is consistent with~\cite{Sahmand09}~\cite{Sahmand09conf}.

\subsection{Application 3}
Consider the scenario where a player detects $n$ independent Markov chains. Each one has two states--good (1) and bad (0)--with transition probabilities $p_{11}, p_{01}$ ($p_{11}>p_{01}$) across chain. The player selects $k$ chains to detect according to its policy and obtain one unit of reward if all detected channels are good; otherwise , no reward. We assume that the probability of $i$ channel in good state at time $t$ is $\omega_i(t)$, then we have the immediate reward function as follows:
\begin{equation*}
    F(\Omega(t))=  \Pi_{i=1}^{n} \omega_i(t)
\end{equation*}
Therefore, the greedy policy is to detect the first $k$ best channels, and $F_{max}^{'}=p_{11}^{k-1} $, $F_{min}^{'}=p_{01}^{k-1}$. We have the following conclusion by Theorem \ref{theorem:optimal_condition}:
\begin{equation*}
    \begin{split}
      0 \leq & \beta \leq \frac{p_{01}^{k-1}}{ p_{11}^{k-1} (1-(1-p_{11})^{n-k-1})}
    \end{split}
    \end{equation*}
So in case of $1<k<n-1$ the greedy policy is not optimal generally for $\beta=1$, while choosing the best $k=1 $ or $k=n-1$ out of $n$ channels is optimal for $0 < \beta \leq 1$.

\section{Conclusion}
\label{section:conclusion}
In this paper, we considered a class of POMDP problem arisen in the fields of cognitive radio network, server scheduling, and downlink scheduling in cellular systems, characterized by the so-called standard reward function. For this class of POMDP, we establish the optimal condition of the greedy policy only focusing the maximization of the immediate reward. The technical approach analyzing this problem is purely mathematical, and thus is general for other models involving the recursive backward induction on the time horizon. The future direction is to investigate non i.i.d Markov chain model through the proposed method, and another more challenging work is to extend the standard reward function by dropping at least one of three basic assumptions.

\appendices
\section{Proof of Lemma ~\ref{lemma:future_reward_symmetry}}
\label{appendix:future_reward_symmetry}
\begin{lemma}
\label{lemma:future_reward_symmetry}
Assume $a^k(t)=\{\omega_1(t),\cdots,\omega_k(t) \}$, $K_{t}(\Omega(t))$
  is symmetric about $\omega_i(t),\omega_j(t)$ for all $1 \leq i,j \leq k$, that is,
\begin{equation*}
    K_{t}(\omega_{1}(t),\cdots,\omega_{i}(t),\cdots,\omega_{j}(t),\cdots,\omega_{n}(t))=K_{t}(\omega_{1}(t),\cdots,\omega_{j}(t),\cdots,\omega_{i}(t),\cdots,\omega_{n}(t))
\end{equation*}
\end{lemma}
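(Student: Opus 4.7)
The plan is to exploit the observation that inside the sum defining $K_t(\Omega(t))$, the argument of $V_{t+1}$ depends on the subset $e \in \mathcal{P}(a^k(t))$ only through its cardinality $|e|$, not through the identity of the elements. The variables $\omega_1(t), \ldots, \omega_k(t)$ enter only through the Bernoulli-type weights $\prod_{i \in e}\omega_i \prod_{j \in a^k(t) \setminus e}(1-\omega_j)$. Once the sum is regrouped by cardinality, the weights for fixed cardinality form a manifestly symmetric polynomial in $\omega_1(t), \ldots, \omega_k(t)$.

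Concretely, I would first rewrite
\begin{equation*}
K_t(\Omega(t)) = \sum_{m=0}^{k} V_{t+1}\bigl(p_{11}[m], \tau(\omega_{k+1}(t)), \ldots, \tau(\omega_n(t)), p_{01}[k-m]\bigr) \cdot S_m(\omega_1(t), \ldots, \omega_k(t)),
\end{equation*}
where
\begin{equation*}
S_m(\omega_1(t), \ldots, \omega_k(t)) \triangleq \sum_{\substack{e \subseteq a^k(t) \\ |e|=m}} \prod_{i \in e}\omega_i(t) \prod_{j \in a^k(t) \setminus e}(1-\omega_j(t)).
\end{equation*}
The factorization is legitimate precisely because the $V_{t+1}$ term depends on $e$ only through $|e| = m$ and does not involve $\omega_1(t), \ldots, \omega_k(t)$ at all (the chosen channels are replaced by $p_{11}$ or $p_{01}$ according to the realized state, and the remaining channels evolve via $\tau$).

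Next, I would show that each $S_m$ is symmetric in $\omega_1(t), \ldots, \omega_k(t)$. Pick any $1 \le i \ne j \le k$; for every subset $e$ of cardinality $m$, the subset $e'$ obtained by swapping the roles of $i$ and $j$ (i.e., $e' = e$ if $\{i,j\} \subseteq e$ or $\{i,j\} \cap e = \emptyset$, and $e' = (e \setminus \{i\}) \cup \{j\}$ or $(e \setminus \{j\}) \cup \{i\}$ otherwise) is a bijection of the $m$-subsets of $a^k(t)$ onto themselves. Under this bijection, the weight $\prod_{\ell \in e} \omega_\ell \prod_{\ell \in a^k(t) \setminus e}(1-\omega_\ell)$ with $\omega_i$ and $\omega_j$ exchanged equals the weight of $e'$ in the original expression, so the total sum $S_m$ is unchanged. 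Hence $K_t(\Omega(t))$ is symmetric in $\omega_i(t)$ and $\omega_j(t)$ for all $1 \le i, j \le k$, which is the claim.

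There is no real obstacle here, since the argument is purely combinatorial: the factorization isolates the dependence on the sensed channels into the weights $S_m$, and the bijection on subsets makes symmetry immediate. The only mild bookkeeping is to confirm that the $V_{t+1}$ factor indeed depends on $e$ only through $|e|$, which is evident from the form of its argument $\bigl(p_{11}[|e|], \tau(\omega_{k+1}), \ldots, \tau(\omega_n), p_{01}[k-|e|]\bigr)$.
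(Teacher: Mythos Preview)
Your proposal is correct and follows essentially the same approach as the paper: both regroup the sum by cardinality $m=|e|$, factor out the $V_{t+1}$ term (which depends on $e$ only through $|e|$), and then observe that the remaining coefficient $S_m=\mathcal{C}_t^m$ is a symmetric polynomial in $\omega_1(t),\ldots,\omega_k(t)$. Your explicit bijection on $m$-subsets is slightly more detailed than the paper's ``it is simple to verify,'' but the argument is the same.
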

\begin{proof}
Let
\begin{equation}
   K_{t}^m(\Omega(t))=\sum_{\substack {e\in{\mathcal{P}(a^k(t))} \\ |e|=m}} \prod_{i\in{e}}\omega_i \prod_{j\in{a^k(t)\backslash{e}}}(1-\omega_j)V_{t+1}(p_{11}[|e|],\tau({\omega_{k+1}}),\cdots,\tau({\omega_{n}}),p_{01}[k-|e|])
\end{equation}
Therefore,
\begin{equation}
    K_{t}(\Omega(t))= \sum_{m=0}^{k}K_{t}^m(\Omega(t))
\end{equation}
Since $V_{t+1}(p_{11}[|e|],\tau({\omega_{k+1}}),\cdots,\tau({\omega_{n}}),p_{01}[k])$ is unrelated with $a^k(t)$, we only need to prove the $k+1$ coefficients is symmetric about $\omega_i(t),\omega_j(t)$ for all $1 \leq i,j \leq k$, that is,
\begin{equation*}
    \mathcal{C}_t^m = \sum_{\substack {e\in{\mathcal{P}(a^k(t))} \\ |e|=m}} \prod_{i\in{e}}\omega_i \prod_{j\in{a^k(t)\backslash{e}}}(1-\omega_j), ~~~ 0 \leq m \leq k
\end{equation*}
is symmetric about $\omega_i(t),\omega_j(t)$. Based on the feature of power set $\mathcal{P}(a^k(t))$, it is simple to verify that $\mathcal{C}_t^m$, ($0 \leq m \leq k$) is symmetric about any two $\omega_i(t),\omega_j(t) \in a^k(t)$. Therefore, $K_{t}(\Omega(t))$
  is symmetric about $\omega_i(t),\omega_j(t)\in a^k(t)$.

\end{proof}

\ifCLASSOPTIONcaptionsoff
  \newpage
\fi

\bibliographystyle{unsrt}
\bibliography{reference}

\end{document}